\newcolumntype{C}[1]{>{\centering\let\newline\\\arraybackslash\hspace{0pt}}m{#1}}
\newtheorem{lemma}{Lemma}[section]
\newtheorem{theorem}{Theorem}[section]
\theoremstyle{definition}
\newtheorem{definition}{Definition}
\newcommand{\be}{\begin{equation}}
\newcommand{\ee}{\end{equation}}
\newcommand{\vect}[1]{\ensuremath{\mathbf{#1}}}
\newcommand{\mat}[1]{\ensuremath{\mathbf{#1}}}
\newcommand{\twonorm}[1]{\left\| {#1} \right\|_2}
\newcommand{\iprod}[2]{\left\langle #1, #2 \right\rangle}
\newcommand{\frob}[1]{\left\|#1\right\|_F}
\newcommand{\order}[1]{\ensuremath{\mathcal{O}\left(#1\right)}}
\newcommand{\eqdef}{\stackrel{\triangle}{=}}
\newcommand{\defeq}{\stackrel{\triangle}{=}}
\newcommand{\abs}[1]{\left|#1\right|}
\newcommand{\trace}[1]{\mathrm{Tr}\left(#1\right)}
\newcommand{\trans}[1]{{#1}^{\top}}
\newcommand{\E}[1]{\mathbb{E}\left[#1\right]}
\newcommand{\Eop}{\mathbb{E}}
\renewcommand{\P}[1]{\mathbb{P}\left[#1\right]}
\newcommand{\Var}[1]{\mathrm{Var}\left[#1\right]}
\newcommand\R{\mathbb{R}}
\newcommand{\Cov}{\mat{\Sigma}}
\newcommand{\A}{\mat{A}}
\newcommand{\At}[1]{\mat{A}_{#1}}
\newcommand{\AtTrans}[1]{\mat{A}_{{#1}}^{\top}}
\newcommand{\B}{\mat{B}}
\newcommand{\Bt}[1]{\mat{B}_{#1}}
\newcommand{\BtTr}[1]{\mathbf{B}_{#1}^{\top}}
\newcommand{\C}{\mat{C}}
\newcommand{\D}{\mat{D}}
\newcommand{\G}{\mat{G}}
\newcommand{\mZero}{\mat{0}}
\newcommand{\Wts}[2]{\mat{W}_{#1,#2}}
\newcommand{\Gt}[1]{{\mat{G}_{#1}}}
\newcommand{\WtsTr}[2]{\mathbf{W}_{#1,#2}^{\top}}
\newcommand{\Vperp}{\mat{V}_{\perp}}
\newcommand{\eye}{\mat{I}}
\renewcommand\u{\vect{u}}
\newcommand\x{\vect{x}}
\newcommand\e{\vect{e}}
\newcommand\ustarTr{{{\vect{u}^*}^{\top}}}
\newcommand\vtilde{{\widetilde{\vect{v}}}}
\newcommand\vg{\vect{g}}
\newcommand\Vtildeperp{{\widetilde{\mat{V}}}_{\perp}}
\newcommand\VtildeperpTr{{\widetilde{\mat{V}}}_{\perp}^{\top}}
\renewcommand\v{\vect{v}}
\newcommand{\va}{\vect{a}}
\newcommand\w{\vect{w}}
\newcommand{\wt}[1]{\vect{w_{#1}}}
\newcommand{\dist}{\mathcal{D}}
\newcommand{\variance}{\mathcal{V}}
\newcommand{\variancetotal}{\overline{\mathcal{V}}}
\newcommand{\upperbound}{\mathcal{M}}
\definecolor{WildStrawberry}{RGB}{255,67,164}
\renewcommand{\cite}{~\citep}
\title{Streaming PCA: Matching Matrix Bernstein and Near-Optimal Finite Sample Guarantees for Oja's Algorithm}
\author{Prateek Jain\footnote{Microsoft Research India. Email: prajain@microsoft.com} \and Chi Jin\footnote{UC Berkeley. Email: chijin@cs.berkeley.edu} \and Sham M. Kakade\footnote{University of Washington. Email: sham@cs.washington.edu} \and Praneeth Netrapalli\footnote{Microsoft Research New England. Email: praneeth@microsoft.com} \and Aaron Sidford\footnote{Microsoft Research New England. Email: asid@microsoft.com}}
\begin{document}
\maketitle

\begin{abstract}
This work provides improved guarantees for
streaming principle component analysis (PCA). Given
$\A_1, \ldots, \A_n\in \R^{d\times d}$ sampled independently from
distributions satisfying $\E{\A_i} = \Cov$ for $\Cov \succeq \mZero$,
this work provides an $O(d)$-space linear-time single-pass streaming algorithm
for estimating the top eigenvector of $\Cov$. The algorithm nearly
matches (and in certain cases improves upon) the accuracy obtained by
the standard batch method that computes top eigenvector of the
empirical covariance $\frac{1}{n} \sum_{i \in [n]} \A_i$ as analyzed
by the matrix Bernstein inequality. Moreover, to achieve constant
accuracy, our algorithm improves upon the best previous known sample
complexities of streaming algorithms by either a multiplicative factor
of $O(d)$ or $1/\mathrm{gap}$ where $\mathrm{gap}$ is the relative
distance between the top two eigenvalues of $\Cov$.

These results are achieved through a novel analysis of the classic Oja's
algorithm, one of the oldest and most popular algorithms for
streaming PCA. In particular, this work shows that simply picking a random initial point
$\w_0$ and applying the update rule
$\w_{i + 1} = \w_i + \eta_i \A_i \w_i$ suffices to accurately estimate
the top eigenvector, with a suitable choice of
$\eta_i$. We believe our result sheds light on how to efficiently
perform streaming PCA both in theory and in practice, and we hope that
our analysis may serve as the basis for analyzing many variants and
extensions of streaming PCA.
\end{abstract}
\newpage

\section{Introduction}\label{sec:intro}

Principal component analysis (PCA) is one of the most fundamental problems in machine learning, numerical linear algebra, and data analysis. It is commonly used for data compression, image processing, and visualization  \cite{jolliffe2002principal} etc.

When we desire to perform PCA on large data sets, it may be the case that we cannot afford more than single pass over the data (or worse to even store the data in the first place) \cite{hall1998incremental,weng2003candid,ross2008incremental}. To alleviate this issue, a popular line of research over the past several decades has been to consider streaming algorithms for PCA under the assumption that the data has reasonable statistical properties \cite{krasulina1970method,oja1982simplified,BalsubramaniDF13,MitgliakisCJ13,SaOR15}. There have been significant breakthroughs in getting near-optimal streaming PCA algorithms under fairly specialized models, e.g. spiked covariance \cite{SaOR15}. 

This work considers one of the most natural variants of PCA,
estimating the top eigenvector of a symmetric matrix, under a mild
(and standard) set
of assumptions under which concentration of measure applies (under the
matrix Bernstein
inequality\cite{vershynin2010introduction,Tropp12}). In particular,
the setting is as follows: 

\begin{definition}[Streaming PCA]
\label{def:streaming-PCA}
Let $\A_1, \A_2, ... , \A_n \in \R^{d \times d}$ be a sequence  of (not necessarily symmetric) matrices sampled independently from distributions that satisfy the following: \begin{enumerate}
	\item	$\E{\At{i}} = \Cov$ for symmetric positive semidefinite (PSD) matrix $\Cov \in \R^{d \times d}$,
	\item	$\twonorm{\At{i} - \Cov} \leq \upperbound$ with probability $1$, and
	\item	$\max\left\{\twonorm{\E{(\At{i}-\Cov)\trans{(\At{i} - \Cov)}}}, \twonorm{\E{\trans{(\At{i}-\Cov)}(\At{i}-\Cov)}}\right\} \leq \variance$.
\end{enumerate}
Let $\v_1, ..., \v_d$ denote the eigenvectors of $\Cov$ and $\lambda_1 \geq ... \geq \lambda_d$ denote the corresponding eigenvalues. Our goal is to compute an $\epsilon$-approximation to $\v_1$, that is a unit vector $\w$ such that
$
\sin^2(\w, \v_1) \eqdef 1 - (\w^\top \v_1)^2 \leq \epsilon
$, in a single pass while minimizing space, time, and error (i.e. $\epsilon$). Note that $\sin(\w, \v_1)$ denotes the $\sin$ of the angle between $\w$ and $\v_1$.
\end{definition}

A special case of Streaming PCA is to estimate the top eigenvector of the covariance matrix of a distribution $\dist$ over $\R^{d}$, i.e.  given independent samples $\va_1, ..., \va_n \in \R^d$ estimate the top eigenvector of $\Eop_{\va \sim \dist}[\va \va^\top]$. This encompasses the popular "spiked covariance model"\cite{johnstone2001distribution}.

It is well known that to solve the Streaming PCA problem, one can simply compute the empirical covariance matrix $\frac{1}{n} \sum_{i \in [n]} \A_i$ and compute the right singular vector of this matrix. Here, matrix Bernstein inequality\cite{vershynin2010introduction,Tropp12} and Wedin's theorem\cite{wedin1972perturbation} implies the following standard sample complexity bound for the Streaming PCA problem:

\begin{theorem} [Eigenvector Concentration using matrix Bernstein and Wedin's theorem]
\label{thm:bernstein_kahan}
Under the assumptions of Definition~\ref{def:streaming-PCA}, 
the top right singular vector $\widehat{\v}$ of $\widehat{\Cov}=\frac{1}{n} \sum_{i \in [n]} \A_i$ is an $\epsilon$-approximation to the top eigenvector $\v_1$ of $\Cov$ with probability $1-\delta$, where  
\begin{equation*}
\sin^2(\widehat{\v}, \v_1) \leq \epsilon
\leq \frac{16 \variance \log \frac{d}{\delta}}{(\lambda_1 -\lambda_2)^2} \cdot \frac{1}{n} + \left(\frac{4\upperbound \log\frac{d}{\delta}}{\lambda_1 - \lambda_2 }\right)^2 \cdot \frac{1}{n^2}.
\end{equation*}
\end{theorem}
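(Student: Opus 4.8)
The plan is to bound the spectral norm of the empirical deviation $\mat{E}\defeq\widehat{\Cov}-\Cov=\frac1n\sum_{i\in[n]}(\At{i}-\Cov)$ using the matrix Bernstein inequality, and then to convert this bound into a bound on the angle $\sin(\widehat{\v},\v_1)$ using Wedin's $\sin\theta$ theorem (together with Weyl's inequality). \textbf{Step 1 (concentration of $\mat E$).} The matrices $\mat X_i\defeq\At{i}-\Cov$ are independent with $\E{\mat X_i}=\mZero$; Assumption~2 of Definition~\ref{def:streaming-PCA} gives $\twonorm{\mat X_i}\le\upperbound$ almost surely, and Assumption~3, together with the vanishing of cross terms by independence, bounds the matrix variance statistic of $\sum_i\mat X_i$ by $n\variance$. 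Matrix Bernstein applied to $\sum_i\mat X_i$, and then dividing by $n$, yields with probability at least $1-\delta$
\[
\twonorm{\mat E}\;\le\;\tau\;\defeq\;\sqrt{\frac{2\variance\log(d/\delta)}{n}}+\frac{2\upperbound\log(d/\delta)}{3n},
\]
where the precise constant (and whether the logarithm carries $d$ or $2d$) will be absorbed into the final constants.

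\textbf{Step 2 (singular-vector perturbation).} Because $\Cov$ is symmetric PSD, its leading left and right singular vectors both equal $\v_1$, with singular values $\lambda_1\ge\lambda_2\ge\cdots$. Writing $\widehat\Cov=\Cov+\mat E$, Weyl's inequality gives $\singmax{\widehat\Cov}\ge\lambda_1-\twonorm{\mat E}$, so the gap separating the leading singular value of $\widehat\Cov$ from $\lambda_2$ is at least $(\lambda_1-\lambda_2)-\twonorm{\mat E}$. Wedin's $\sin\theta$ theorem applied to the one-dimensional leading singular subspace then yields
\[
\sin(\widehat{\v},\v_1)\;\le\;\frac{\max\{\twonorm{\mat E\v_1},\,\twonorm{\trans{\mat E}\widehat{\v}}\}}{(\lambda_1-\lambda_2)-\twonorm{\mat E}}\;\le\;\frac{\twonorm{\mat E}}{(\lambda_1-\lambda_2)-\twonorm{\mat E}}.
\]

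\textbf{Step 3 (combine) and discussion.} Up to the constants chosen, the right-hand side of the claimed bound is at least $4\tau^2/(\lambda_1-\lambda_2)^2$ (using $(a+b)^2\le 2a^2+2b^2$ on $\tau$), so it suffices to show $\sin^2(\widehat{\v},\v_1)\le 4\tau^2/(\lambda_1-\lambda_2)^2$. If $\tau\ge\tfrac12(\lambda_1-\lambda_2)$ this holds trivially since $\sin^2(\widehat{\v},\v_1)\le1$; otherwise $(\lambda_1-\lambda_2)-\twonorm{\mat E}\ge(\lambda_1-\lambda_2)-\tau\ge\tfrac12(\lambda_1-\lambda_2)$ on the event of Step~1, and Step~2 gives $\sin(\widehat{\v},\v_1)\le2\tau/(\lambda_1-\lambda_2)$, from which the bound follows. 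The argument is routine; the only points that need care are (i) using the rectangular/SVD forms of Bernstein and Wedin, since the $\At{i}$ and hence $\widehat\Cov$ need not be symmetric, and (ii) the regime in which $\twonorm{\mat E}$ is comparable to the eigengap, which is handled by the trivial bound $\sin^2(\widehat{\v},\v_1)\le1$. I do not expect a genuine obstacle here: this theorem is the standard "batch" benchmark against which the streaming (Oja) analysis is later compared.
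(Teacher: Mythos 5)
Your proposal is correct and follows essentially the same route as the paper: matrix Bernstein applied to $\frac{1}{n}\sum_i(\At{i}-\Cov)$ followed by Wedin's theorem, with only cosmetic differences (the sum-form versus max-form of the Bernstein tail, and your use of the perturbed-gap version of Wedin plus Weyl and a case split on whether $\twonorm{\mat{E}}$ is comparable to $\lambda_1-\lambda_2$, where the paper simply invokes Wedin with the population gap $\lambda_1-\lambda_2$ in the denominator). If anything, your explicit handling of the small-gap regime via the trivial bound $\sin^2(\widehat{\v},\v_1)\leq 1$ is slightly more careful than the paper's one-line invocation.
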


Theorem~\ref{thm:bernstein_kahan} is essentially the previous best
sample complexity known for estimating the top eigenvector~\footnote{In recent work in\cite{jin2015robust} it was shown
  that the $log(d/\delta)$ factor in the first term could be removed
  asymptotically for small enough $\epsilon$ if only constant success
  probability is required.}. 
Unfortunately, the above is purely a statistical claim, and,
algorithmically, there are least two concerns. First, computing the
empirical covariance matrix
$\widehat{\Cov}=\frac{1}{n}\sum_{i \in [n]} \A_i$ naively requires
$O(d^2)$ time and space, and second, computing the top eigenvector of
the empirical covariance matrix in general may require super linear
time\cite{golub2012matrix}. While there have been many attempts to
produce streaming algorithms that use only $O(d)$ space to solve the
streaming PCA problem, to our knowledge, all previous methods
either lose a multiplicative factor of either
$\frac{\lambda_1}{\lambda_1 - \lambda_2}$ or $d$ in the analysis in
order to achieve constant accuracy when applied in our
setting\cite{BalsubramaniDF13,MitgliakisCJ13, HardtP14,
  SaOR15,jin2015robust}.

In an attempt to overcome this limitation and improve the guarantees
for solving the streaming PCA problem, this work seeks to address the following question:

\begin{quote}
\emph{
Can we match the sample complexity of matrix Bernstein + Wedin's theorem with an algorithm that uses $O(d)$ space only and takes a single linear-time pass over the input?}
\end{quote}

This work answers this question in the affirmative, showing that one
can succeed with constant probability matching the sample complexity
of Theorem~\ref{thm:bernstein_kahan} up to logarithmic terms and small
additive factors. Interestingly, this is achieved by providing a novel
analysis of the classical Oja's algorithm, which is perhaps, the most
popular algorithm for Streaming PCA\cite{oja1982simplified}.

\begin{algorithm}[H]
	\caption{Oja's algorithm for computing top eigenvector}
	\begin{algorithmic}\label{algo:oja}
		\renewcommand{\algorithmicrequire}{\textbf{Input: }}
		\renewcommand{\algorithmicensure}{\textbf{Output: }}
		\REQUIRE $\At{1},\cdots,\At{n}$.
		\STATE Choose $\wt{0}$ uniformly at random from the unit sphere
		\FOR{$t = 1,\cdots,n$}
		\STATE $\wt{i} \leftarrow \wt{i-1} + \eta_i \At{i}\wt{i-1}$
		\STATE $\wt{i} \leftarrow \wt{i}/\twonorm{\wt{i}}$
		\ENDFOR
		\ENSURE  $\wt{n}$
	\end{algorithmic}
\end{algorithm}

Oja's algorithm is one of the simplest algorithms one would imagine
for the streaming PCA problem (See Algorithm~\ref{algo:oja}). In fact,
due to its simplicity, it was proposed a neurally plausible algorithm.
In the case that each $\A_i$ comes from the same distribution $\dist$
it corresponds to simply performing projected stochastic gradient
descent on the objective function of maximizing the Rayleigh Quotient
over the distribution
$\max_{\twonorm{w} = 1} \Eop_{\A \sim \dist} \w^\top \A \w$. It is
well known that under very mild conditions on the stepsize sequence,
Oja's algorithm asymptotically converges to the top eigenvector of the
covariance matrix $\Cov$\cite{oja1982simplified}. However, obtaining
optimal rates of convergence, let alone finite sample guarantees, for
Streaming PCA has been quite challenging. The best known results are
off from Theorem~\ref{thm:bernstein_kahan} by a factor of
$\order{d}$\cite{SaOR15}.

This work shows that for proper choice of learning rates $\eta_i$,
Oja's algorithm in fact can improve the best known results for
streaming PCA and answer our question in the affirmative. In
particular, we have that:

\begin{theorem}\label{thm:intro1}
Let the assumptions of Definition~\ref{def:streaming-PCA} hold. Suppose the step size sequence for Algorithm~\ref{algo:oja} is chosen to be $\eta_i = \frac{\log d}{(\lambda_1-\lambda_2)(\beta+i)}$, where
	\begin{align*}
	\beta \eqdef 40\max\left(\frac{ \upperbound \log d}{(\lambda_1-\lambda_2)},\frac{\left(\variance+\left(\lambda_{1}\right)^2\right) \log^2 d}{(\lambda_1-\lambda_2)^2}\right).
	\end{align*}
	Then the output $\wt{n}$ of Algorithm~\ref{algo:oja} is an $\epsilon$-approximation to the top eigenvector $\v_1$ of $\Cov$ satisfying
	\begin{align*}
	\sin^2(\w_n,\v_1)\leq \epsilon \;\leq \;C \left(\frac{\variance \log d}{(\lambda_1-\lambda_2)^2}\cdot \frac{1}{n} + \left(\frac{2\beta}{n}\right)^{2 \log d}\right),
	\end{align*}
	with probability greater than $3/4$. Here $C$ is an absolute numerical constant.
\end{theorem}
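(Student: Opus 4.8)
Since normalizing each iterate in Algorithm~\ref{algo:oja} does not change its direction, it is equivalent to analyze the unnormalized product $\Bt{n}\wt{0}$ where $\Bt{n}\defeq(\eye+\eta_n\At{n})(\eye+\eta_{n-1}\At{n-1})\cdots(\eye+\eta_1\At{1})$; writing $\Vperp=[\v_2,\dots,\v_d]$ and using that $\{\v_1\}\cup\{\text{columns of }\Vperp\}$ is an orthonormal basis,
\[
\sin^2(\w_n,\v_1)\;=\;\frac{\twonorm{\Vperp^\top\Bt{n}\wt{0}}^2}{\twonorm{\Bt{n}\wt{0}}^2}\;\le\;\frac{\twonorm{\Vperp^\top\Bt{n}\wt{0}}^2}{(\v_1^\top\Bt{n}\wt{0})^2}\;=:\;\frac{N}{D}.
\]
The plan is to (i) bound $\Eop[N]$ through an operator-valued recursion, (ii) lower bound $D$ with constant probability via a martingale-concentration argument, and (iii) combine the two with Markov's inequality. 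It helps to keep in mind the two regimes induced by $\eta_i=\frac{\log d}{(\lambda_1-\lambda_2)(\beta+i)}$: a ``warm-up'' over roughly the first $\beta$ steps, where $\eta_i$ is essentially a small constant and $\sin^2$ is driven down from the random start to a constant, followed by the $\Theta(1/i)$ tail that produces the $O(1/n)$ rate.

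\noindent\textbf{Step 1: the numerator.} As $\wt{0}$ is uniform on the sphere, $\Eop_{\wt{0}}[\wt{0}\wt{0}^\top]=\tfrac1d\eye$, and independence of the $\At{i}$ gives $\Eop[N]=\tfrac1d\trace{\Vperp^\top\M_n\Vperp}$ with $\M_i\defeq\Eop[\Bt{i}\Bt{i}^\top]$ obeying $\M_0=\eye$ and $\M_i=\M_{i-1}+\eta_i(\Cov\M_{i-1}+\M_{i-1}\Cov)+\eta_i^2\,\Eop[\At{i}\M_{i-1}\At{i}^\top]$. Splitting $\At{i}=\Cov+(\At{i}-\Cov)$, the $\Cov$-part preserves the eigenbasis block structure, and the crucial estimate on the noise part --- the one that lets us match matrix Bernstein rather than lose a factor $d$ --- is the \emph{trace} bound $\trace{\Vperp^\top\Eop[(\At{i}-\Cov)\M(\At{i}-\Cov)^\top]\Vperp}\le\trace{\M\,\Eop[(\At{i}-\Cov)^\top(\At{i}-\Cov)]}\le\variance\trace{\M}$, rather than the crude $\twonorm{\cdot}\le\variance\twonorm{\M}$ applied to each of the $d-1$ coordinates (which would inject $(d-1)\variance\twonorm{\M}$ into the orthogonal block); the bound $\twonorm{\At{i}-\Cov}\le\upperbound$ of Definition~\ref{def:streaming-PCA} controls the remaining error terms. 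Unrolling, the noise entering at step $i$ has trace $\lesssim\eta_i^2\variance\trace{\M_{i-1}}$ and then grows by at most $\prod_{i<j\le n}(1+\eta_j\lambda_2)^2$ inside $\Vperp$, so $\trace{\Vperp^\top\M_n\Vperp}\lesssim(d-1)\prod_{j\le n}(1+\eta_j\lambda_2)^2+\variance\sum_{i\le n}\eta_i^2\trace{\M_{i-1}}\prod_{i<j\le n}(1+\eta_j\lambda_2)^2$, where the choice of $\beta$ makes the accumulated second-order factors $\prod_i(1+O(\eta_i^2(\variance+\lambda_1^2)))$ equal to $O(1)$, keeps $\eta_i\upperbound=O(1/\log d)$, and where $\trace{\M_{i-1}}\lesssim\v_1^\top\M_{i-1}\v_1+(d-1)\prod_{j<i}(1+\eta_j\lambda_2)^2$ with $\v_1^\top\M_{i-1}\v_1\approx\prod_{j<i}(1+\eta_j\lambda_1)^2$.

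\noindent\textbf{Step 2: the denominator.} Write $\wt{0}=\cos\theta_0\,\v_1+\sin\theta_0\,\wperp$. Anti-concentration of a single coordinate of a uniform unit vector gives $\cos^2\theta_0\ge c/d$ with probability at least an absolute constant. Decomposing $\v_1^\top\Bt{n}\wt{0}=\cos\theta_0\,(\v_1^\top\Bt{n}\v_1)+\sin\theta_0\,(\v_1^\top\Bt{n}\wperp)$, I would show that $\v_1^\top\Bt{n}\v_1$ stays within a constant factor of its mean $\prod_{j\le n}(1+\eta_j\lambda_1)$: regarding $\v_1^\top\Bt{i}\v_1\big/\prod_{j\le i}(1+\eta_j\lambda_1)$ as a martingale in $i$ whose increments have conditional second moment $O(\eta_i^2(\variance+\lambda_1^2))$, its relative fluctuation is $O(\sum_i\eta_i^2(\variance+\lambda_1^2))=O(1)$ by the definition of $\beta$; a similar second-moment bound handles the mean-zero cross term $\v_1^\top\Bt{n}\wperp$. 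Chebyshev then produces a constant-probability event $G$ on which $D\gtrsim\tfrac{c}{d}\prod_{j\le n}(1+\eta_j\lambda_1)^2$.

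\noindent\textbf{Step 3: combine, and the main obstacle.} On $G$, Markov's inequality applied to $N$ (using $\Eop[N\,\ind{G}]\le\Eop[N]$) gives, after fixing the constants so the total failure probability is below $1/4$, $\sin^2(\w_n,\v_1)\lesssim\Eop[N]\big/\bigl(\tfrac{c}{d}\prod_{j\le n}(1+\eta_j\lambda_1)^2\bigr)$. Substituting Step~1, the factor $d$ cancels against the $\tfrac1d$ in $\Eop[N]$ on the variance term --- this is exactly where the trace bound of Step~1 pays off --- and survives only on the ``deterministic'' term; using $\frac{(1+\eta_j\lambda_2)^2}{(1+\eta_j\lambda_1)^2}=1-2\eta_j(\lambda_1-\lambda_2)+O(\eta_j^2(\variance+\lambda_1^2))$, $\sum_{j\le n}\eta_j=\frac{\log d}{\lambda_1-\lambda_2}\log\frac{n+\beta}{\beta}$, and $\sum_{i\le n}\eta_i^2\prod_{i<j\le n}(1-2\eta_j(\lambda_1-\lambda_2))\lesssim\frac{\log d}{(\lambda_1-\lambda_2)^2 n}$, this yields $\sin^2(\w_n,\v_1)\lesssim d\bigl(\tfrac{\beta}{n+\beta}\bigr)^{2\log d}+\frac{\variance\log d}{(\lambda_1-\lambda_2)^2}\cdot\frac1n\le\bigl(\tfrac{2\beta}{n}\bigr)^{2\log d}+\frac{C\variance\log d}{(\lambda_1-\lambda_2)^2}\cdot\frac1n$. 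The main obstacle is making Step~1 rigorous: one must control $\trace{\M_{i-1}}$ relative to $\v_1^\top\M_{i-1}\v_1$ even though early in the run the iterate is \emph{not} concentrated on $\v_1$ (the random start has $\sin^2\theta_0\approx1$), so the excess $\trace{\M_{i-1}}-\Theta(\v_1^\top\M_{i-1}\v_1)$ must be tracked and shown to propagate forward only into the $(\beta/n)^{2\log d}$ term --- a coupled recursion between the noise bound and the rate of mass concentration in which circularity must be avoided --- while simultaneously keeping all the martingale estimates of Step~2 valid on a single good event. The remaining product and tail-sum manipulations are routine once $\beta$ is chosen to dominate every second-order term.
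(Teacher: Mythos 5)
Your overall architecture---passing to the product $\Bt{n}$, bounding $\E{\trace{\Vperp^\top\Bt{n}\BtTr{n}\Vperp}}$ by an operator recursion plus Markov, lower bounding the $\v_1$-component with constant probability, and combining---is the same as the paper's, and your Step~1 and Step~3 computations do reproduce the paper's bounds (the circularity you flag in Step~1 is resolved in the paper by injecting the step-$i$ noise proportionally to $\twonorm{\E{\Bt{i-1}\BtTr{i-1}}}$, which is bounded separately in Lemma~\ref{lem:spectralboundE}, rather than to the full trace, so the recursion in Lemma~\ref{lem:traceboundE} never couples back on itself). The genuine gap is in Step~2. You keep only $\cos\theta_0\,(\v_1^\top\Bt{n}\v_1)$ with $\cos^2\theta_0\approx 1/d$ and propose to control the cross term $\sin\theta_0\,(\v_1^\top\Bt{n}\wperp)$ by a second-moment bound and Chebyshev. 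But the second moment of $\v_1^\top\Bt{n}\wperp$ is of order $\variance\sum_i\eta_i^2\cdot\exp\left(2\lambda_1\sum_i\eta_i\right)$, and the choice of $\beta$ only makes $(\variance+\lambda_1^2)\sum_i\eta_i^2$ a small \emph{constant} (about $1/40$), not $o(1/d)$. So the cross term's typical magnitude is a constant fraction of $\prod_j(1+\eta_j\lambda_1)$, i.e.\ roughly $\sqrt{d}$ times larger than the main term you retain (and $\sin\theta_0\approx 1$). Chebyshev applied to this second moment therefore certifies nothing at the scale $\frac{c}{\sqrt d}\prod_j(1+\eta_j\lambda_1)$, and since the cross term has unknown sign, no second-moment argument can exclude near-cancellation of $\v_1^\top\Bt{n}\w_0$; the claimed event $G$ with $D\gtrsim\frac{c}{d}\prod_j(1+\eta_j\lambda_1)^2$ does not follow, and everything downstream rests on it.

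What is missing is an anti-concentration step in the $\w_0$-randomness, and this is precisely how the paper proceeds: Lemma~\ref{lem:onesteppowermethod} conditions on $\B=\Bt{n}$, writes $\w_0=\vg/\twonorm{\vg}$ with $\vg$ Gaussian, and uses that $\v_1^\top\B\vg$ is exactly Gaussian with variance $\v_1^\top\B\B^\top\v_1$, giving $(\v_1^\top\B\w_0)^2\gtrsim\frac{\delta^2}{d\,\log(1/\delta)}\,\v_1^\top\B\B^\top\v_1$ with probability $1-\delta$ (and handles the numerator by a $\chi^2$ bound in the same stroke). The denominator problem is thereby reduced to lower bounding the nonnegative scalar $\v_1^\top\Bt{n}\BtTr{n}\v_1$ over the data randomness alone, which the paper does via its expectation (Lemma~\ref{lem:lowerboundE}) and second moment (Lemma~\ref{lem:lowerboundV}) plus Chebyshev---a sum of squares, so no sign cancellation can occur. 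Your Step~2 can be repaired by adopting exactly this conditional-Gaussian step; your martingale control of $\v_1^\top\Bt{n}\v_1$ then still suffices, since $\v_1^\top\Bt{n}\BtTr{n}\v_1\geq(\v_1^\top\Bt{n}\v_1)^2$. As written, however, the deterministic decomposition of $\w_0$ plus Chebyshev does not establish the denominator bound, so the proof is incomplete at that point.
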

The error above should be interpreted as being the sum of a
$\order{\frac{1}{n}}$ higher order term and another
$\order{(2\beta/n)^{2 \log d}}$ lower order term which is at most
$o(\frac{1}{n^{\log d}})$ (once $n > 4\beta^2$). In particular, this
result shows that, up to an additive lower order term, one can match
Theorem~\ref{thm:bernstein_kahan} with an asymptotic error of
$\order{\frac{\variance \log d}{(\lambda_1-\lambda_2)^2 n}}$ with
constant probability. The lower order term has $\beta$ which is the
$\max$ of three parts:
$\frac{ \upperbound \log d}{(\lambda_1-\lambda_2)}$,
$\frac{ \variance \log^2 d}{(\lambda_1-\lambda_2)^2}$ and
$\frac{ \lambda_1^2 \log^2 d}{(\lambda_1-\lambda_2)^2}$. The first
part, depending on $\upperbound$, is exactly the same as what appears
in Theorem~\ref{thm:bernstein_kahan}. The second one, depending on
$\variance$ has an additional $\log d$ factor over the first order
term and is irrelevant once, say $n > 10 \beta$. Notably, the third part,
depending on $\lambda_1^2$, does not appear in
Theorem~\ref{thm:bernstein_kahan}; it arises here entirely due to
computational reasons: the setting allows only a \emph{single linear-time
  pass} over the matrices, while Theorem~\ref{thm:bernstein_kahan}
makes no such assumption. For instance, consider the case
$\variance=0$ which means $\At{1}=\Cov$. Matrix Bernstein tells us
that one sample is sufficient to compute $\v_1$. However, it is not
evident how to compute it using a single pass over $\At{1}$. Note
however, that the rate at which the lower order terms,
i.e. $o\left(\frac{1}{n^{\log d}}\right)$, decrease is much better than
$\order{1/n^2}$ guaranteed by Theorem~\ref{thm:bernstein_kahan}.

In fact, this result also improves the asymptotic error rate obtained
by Theorem~\ref{thm:bernstein_kahan}. In particular, the
following result shows that Oja's algorithm gets an asymptotic rate of
$\order{\frac{\variance }{(\lambda_1-\lambda_2)^2 n}}$ which is better
than that of matrix Bernstein by a factor of
$\order{\log d}$.\footnote{A similar asymptotic result was recently
  obtained by\cite{jin2015robust}. However, their result requires an
  initial vector that is constant close to $\v_1$, which itself is a
  difficult problem.} \begin{theorem}\label{thm:intro2} Let the
  assumptions of Definition~\ref{def:streaming-PCA} hold. Suppose the
  step size sequence for Algorithm~\ref{algo:oja} is chosen to be
  $\eta_i = \frac{6}{(\lambda_1-\lambda_2)(\beta+i)}$, where
  \begin{align*} \beta \eqdef 720\max\left(\frac{ \upperbound
      }{(\lambda_1-\lambda_2)},\frac{\variance + \lambda_{1}^2
      }{(\lambda_1-\lambda_2)^2}\right). \end{align*} Suppose
  $n > \beta^{1.2}d^{0.1}$. Then the output $\wt{n}$ of
  Algorithm~\ref{algo:oja} is an $\epsilon$-approximation to the top
  eigenvector $\v_1$ of $\Cov$ satisfying \begin{align*}
    \sin^2(\w_n,\v_1)\leq \epsilon \;\leq \; C \left(\frac{ \variance
      }{(\lambda_1-\lambda_2)^2}\cdot \frac{1}{n} +
      \frac{1}{n^2}\right), \end{align*} with probability greater than
  $3/4$. Here $C$ is an absolute numerical constant. \end{theorem}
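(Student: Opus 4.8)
The plan is to apply the general analysis of Algorithm~\ref{algo:oja} with $\eta_i=\frac{6}{(\lambda_1-\lambda_2)(\beta+i)}$ and then verify that the hypothesis $n>\beta^{1.2}d^{0.1}$ forces every non-dominant term down to $\order{1/n^2}$. Write $\B_t\defeq(\eye+\eta_t\At{t})\cdots(\eye+\eta_1\At{1})$ for the product of the un-normalized Oja updates, so that $\wt{n}=\B_n\wt{0}/\twonorm{\B_n\wt{0}}$ and
\[
\sin^2(\wt{n},\v_1)\;=\;\frac{\twonorm{\trans{\Vperp}\B_n\wt{0}}^2}{\twonorm{\B_n\wt{0}}^2}\;\le\;\frac{\twonorm{\trans{\Vperp}\B_n\wt{0}}^2}{(\trans{\v_1}\B_n\wt{0})^2}.
\]
I would bound the numerator (``perpendicular energy'') in expectation, lower-bound the denominator (``signal energy'') with high probability, and divide. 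Here $\wt{0}$ is uniform on the sphere, independent of the $\At{i}$, with $\E{\wt{0}\trans{\wt{0}}}=\frac1d\eye$ and $|\trans{\v_1}\wt{0}|\gtrsim1/\sqrt d$ with probability close to $1$.

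First I would set up the one-step recursions for $m_t\defeq\E{\twonorm{\trans{\Vperp}\B_t\wt{0}}^2}$ and $s_t\defeq\E{(\trans{\v_1}\B_t\wt{0})^2}$. Conditioning on $\B_{t-1}$, using $\E{\At{t}}=\Cov$, and invoking the third part of Definition~\ref{def:streaming-PCA} in the scalar forms $\E{\twonorm{\trans{\Vperp}(\At{t}-\Cov)\v}^2}\le\variance\twonorm{\v}^2$ and $\E{(\trans{\v_1}(\At{t}-\Cov)\v)^2}\le\variance\twonorm{\v}^2$, one gets
\[
m_t\le(1+\eta_t\lambda_2)^2m_{t-1}+\eta_t^2\variance(m_{t-1}+s_{t-1}),\qquad
s_t\le\bigl((1+\eta_t\lambda_1)^2+\eta_t^2\variance\bigr)s_{t-1}+\eta_t^2\variance\,m_{t-1},
\]
with $m_0\le1$, $s_0=1/d$. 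One must track these scalars rather than $\E{\B_t\trans{\B_t}}$: the only available bound $\E{(\At{t}-\Cov)M\trans{(\At{t}-\Cov)}}\preceq\variance\,\trace{M}\eye$ would make $\trace{\E{\B_t\trans{\B_t}}}$ blow up by $e^{\Omega(d)}$, whereas the noise actually injected into $m_t,s_t$ is scaled by the genuinely small scalar $\twonorm{\B_{t-1}\wt{0}}^2=m_{t-1}+s_{t-1}$. The choice of $\beta$ makes $\eta_i\upperbound$ small and $\sum_i\eta_i^2(\variance+\lambda_1^2)=\order{1}$, so each self-feeding factor $1+\eta_i^2\variance/(1+\eta_i\lambda_j)^2$ costs only a constant over the product and $\prod_{i\le t}(1+\eta_i\lambda_j)\le e^{\sum_{i\le t}\eta_i\lambda_j}$ up to an $\order{\sum_i\eta_i^2\lambda_1^2}=\order{1}$ correction. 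A discrete variation-of-constants argument then yields, up to absolute constants, $s_n\gtrsim\frac1d\prod_{i\le n}(1+\eta_i\lambda_1)^2$ and
\[
m_n\;\lesssim\;\prod_{i\le n}(1+\eta_i\lambda_2)^2\;+\;\frac{\variance}{d}\sum_{j\le n}\eta_j^2\Bigl(\prod_{i<j}(1+\eta_i\lambda_1)^2\Bigr)\Bigl(\prod_{i>j}(1+\eta_i\lambda_2)^2\Bigr),
\]
the factor $\frac1d$ coming from $\twonorm{\B_{j-1}\wt{0}}^2\approx s_{j-1}\approx\frac1d\prod_{i<j}(1+\eta_i\lambda_1)^2$ (a generic unit vector puts $\Theta(1/d)$ mass on $\v_1$).

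Now pass to high probability and combine. Markov on the numerator gives $\twonorm{\trans{\Vperp}\B_n\wt{0}}^2\le8m_n$ with probability $\ge7/8$. For the denominator, write $\trans{\v_1}\B_t\wt{0}=\bigl(\prod_{i\le t}(1+\eta_i\lambda_1)\bigr)\widetilde Y_t$ with $\widetilde Y_t$ a martingale, $\widetilde Y_0=\trans{\v_1}\wt{0}$; its conditional increment variances are summable (to $\order{1}\cdot s_n/\prod_{i\le n}(1+\eta_i\lambda_1)^2$) and its increments are uniformly bounded by a small constant (since $\eta_i\upperbound$ is small), so a fourth-moment / Paley--Zygmund estimate gives $(\trans{\v_1}\B_n\wt{0})^2\gtrsim\frac1d\prod_{i\le n}(1+\eta_i\lambda_1)^2$ with probability bounded below by an absolute constant; a union bound (also with $\{|\trans{\v_1}\wt{0}|\gtrsim1/\sqrt d\}$), after tuning the constants, leaves failure probability $<1/4$. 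Dividing, the $\frac1d$'s cancel in the noise term and, using $\log\frac{1+\eta_i\lambda_2}{1+\eta_i\lambda_1}\le-\eta_i(\lambda_1-\lambda_2)+\tfrac12\eta_i^2\lambda_1^2$,
\[
\sin^2(\wt{n},\v_1)\;\lesssim\;d\exp\!\Bigl(-2\sum_{i\le n}\eta_i(\lambda_1-\lambda_2)\Bigr)\;+\;\variance\sum_{j\le n}\eta_j^2\exp\!\Bigl(-2\sum_{i>j}\eta_i(\lambda_1-\lambda_2)\Bigr).
\]
Substituting $\eta_i(\lambda_1-\lambda_2)=\frac6{\beta+i}$: since $\sum_{i\le n}\eta_i(\lambda_1-\lambda_2)=6\log\frac{\beta+n}{\beta+1}\ge6\log\frac n{2\beta}$ and $\sum_i\eta_i^2\lambda_1^2\le\tfrac1{20}$, the first term is $\lesssim d(\beta/n)^{12}$, which is $\lesssim1/n^2$ once $n^{10}\gtrsim d\beta^{12}$ — guaranteed by $n>\beta^{1.2}d^{0.1}$ — while an elementary integral estimate bounds the second term by $\order{\variance/((\lambda_1-\lambda_2)^2 n)}$, giving the claimed $\epsilon\le C\bigl(\frac{\variance}{(\lambda_1-\lambda_2)^2}\cdot\frac1n+\frac1{n^2}\bigr)$.

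The hard part will be the pathwise lower bound on $(\trans{\v_1}\B_n\wt{0})^2$: this is a squared martingale whose fluctuation scale, although a bounded constant, exceeds its mean $\sim1/d$, so concentration about the mean fails and one genuinely needs the fourth-moment argument (and delicate constant-tracking) to reach overall probability $>3/4$. Closely tied to this is carrying the coupled $(m_t,s_t)$ recursions without leaking spurious factors of $d$ into the leading term — which is exactly why one keeps the scalar energies rather than $\E{\B_n\trans{\B_n}}$, and why all three components of $\beta$ (from $\upperbound$, $\variance$, and $\lambda_1^2$) are needed, so that every $\prod(1+x)\approx e^{\sum x}$ approximation contributes only an absolute-constant correction. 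The smaller step-size constant $6$ (versus $\log d$ in Theorem~\ref{thm:intro1}) is what removes the extra $\log d$ from the leading term, but it shrinks the burn-in exponent from $2\log d$ to $12$, which is precisely why the factor-$d$ burn-in term $d(\beta/n)^{12}$ now demands the stronger sample size $n>\beta^{1.2}d^{0.1}$ in order to fall below $1/n^2$.
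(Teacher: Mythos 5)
Your overall skeleton parallels the paper's proof: work with the product $\Bt{n}=(\eye+\eta_n\At{n})\cdots(\eye+\eta_1\At{1})$, derive the master bound $\sin^2(\wt{n},\v_1)\lesssim d\exp\bigl(-2(\lambda_1-\lambda_2)\sum_i\eta_i\bigr)+\variance\sum_i\eta_i^2\exp\bigl(-2(\lambda_1-\lambda_2)\sum_{j>i}\eta_j\bigr)$, substitute $\eta_i=\frac{6}{(\lambda_1-\lambda_2)(\beta+i)}$, and use $n>\beta^{1.2}d^{0.1}$ (i.e.\ $n^{10}>\beta^{12}d$) to push $d(\beta/n)^{12}$ below $1/n^2$; this is exactly Theorem~\ref{thm:main1} combined with Theorem~\ref{thm:main} at $\alpha=6$, and your final substitution and integral estimate are fine. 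The genuine gap is in how you certify the success probability $3/4$, i.e.\ the high-probability lower bound on the signal $(\v_1^\top\Bt{n}\wt{0})^2$. You keep the randomness of $\wt{0}$ coupled with the data randomness and invoke a ``fourth-moment / Paley--Zygmund'' estimate, claiming that tuning constants leaves failure probability $<1/4$. This cannot work: conditioned on any realization of $\Bt{n}$, the overlap $\v_1^\top\Bt{n}\wt{0}$ is $\twonorm{\Bt{n}^\top\v_1}$ times a single coordinate of a uniform unit vector, so $\E{(\v_1^\top\Bt{n}\wt{0})^4}\geq(3-o(1))\bigl(\E{(\v_1^\top\Bt{n}\wt{0})^2}\bigr)^2$, and any Paley--Zygmund or Cantelli bound on the coupled quantity is therefore capped at success probability roughly $1/3$, regardless of thresholds or constants; the variable genuinely fluctuates on the scale of its mean (you yourself note that martingale concentration around the $\Theta(1/\sqrt d)$ initial overlap fails, so conditioning on $\{|\v_1^\top\wt{0}|\gtrsim 1/\sqrt d\}$ does not rescue a pathwise bound either). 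What is needed is not a moment inequality but the distributional anti-concentration of the $\wt{0}$-overlap, and to exploit it you must decouple the two sources of randomness as the paper does: using only the data randomness, show $\v_1^\top\Bt{n}\Bt{n}^\top\v_1$ is large except with probability $\delta$ (Chebyshev via the second-moment lower bound and fourth-moment upper bound, Lemmas~\ref{lem:lowerboundE} and~\ref{lem:lowerboundV}) and $\trace{\Vperp^\top\Bt{n}\Bt{n}^\top\Vperp}$ is small except with probability $\delta$ (Markov via Lemma~\ref{lem:traceboundE}); then, conditionally on such a good $\Bt{n}$, the random $\wt{0}$ gives $\sin^2(\wt{n},\v_1)\leq\frac{C\log(1/\delta)}{\delta}\,\trace{\Vperp^\top\Bt{n}\Bt{n}^\top\Vperp}/(\v_1^\top\Bt{n}\Bt{n}^\top\v_1)$ (Lemma~\ref{lem:onesteppowermethod}), where a small $\delta$ costs a constant factor in the error bound rather than capping the probability.

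A secondary point: the premise behind your scalarized $(m_t,s_t)$ recursions is false. You assert that tracking $\E{\Bt{t}\Bt{t}^\top}$ would force a bound of the form $\E{(\At{t}-\Cov)\mat{M}(\At{t}-\Cov)^\top}\preceq\variance\,\trace{\mat{M}}\,\eye$ and hence an $e^{\Omega(d)}$ blow-up, but Definition~\ref{def:streaming-PCA} bounds the \emph{operator norm} of $\E{(\At{t}-\Cov)(\At{t}-\Cov)^\top}$, so $\E{(\At{t}-\Cov)\Vperp\Vperp^\top(\At{t}-\Cov)^\top}\preceq\variance\,\eye$; this is precisely how Lemma~\ref{lem:traceboundE} runs the matrix-level recursion and picks up only the additive factor $d$ from $\trace{\Vperp\Vperp^\top}=d-1$, which becomes the burn-in term. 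Your coupled scalar recursions can be carried through (the $m_{j-1}$ contribution to the noise injection, after dividing by $s_n$, is absorbed into the burn-in term since $\variance\sum_j\eta_j^2=O(1)$ under your choice of $\beta$), but they buy nothing over the paper's route and are what led you into the coupled-randomness trap above.
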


Note that Theorems~\ref{thm:intro1} and~\ref{thm:intro2} guarantee
success probability of $3/4$. One way to boost the probability to
$1-\delta$, for some $\delta > 0$, is to run $\order{\log 1/\delta}$
copies of the algorithm, each with $3/4$ success probability and then
output the geometric median of the solutions, which can be done in
nearly linear time\cite{cohen2016median}. The detailes are omitted
here.

Beyond the improved sample complexities we believe our analysis sheds
light on the type of step sizes for which Oja's algorithm converges
quickly and therefore illuminates how to efficiently perform streaming
PCA. We note that we have essentially assumed an oracle which sets the
step size sequence, and an important question is how to set the step
size in a robust and data data driven manner. Moreover, we believe that our
analysis is fairly general and hope that it may be extended to make
progress on analyzing the many variants of PCA that occur in both
theory and in practice.


\subsection{Comparison with Existing Results}\label{sec:comp}
Here we compare our sample complexity bounds with existing analyses of various methods. Recall that the error of the estimate $\w$ is $\sin^2(\w, \v_1)=1-(\w^\top \v_1)^2$.

We consider three popular methods used for computing $\v_1$. The first one is the batch method which computes largest eigenvector of empirical covariance and uses Wedin's theorem with matrix Bernstein inequality (cf. Theorem~\ref{thm:bernstein_kahan}). The second method is Alecton, which is very similar to Oja's algorithm \cite{SaOR15}. Finally, consider a block-power method (BPM) \cite{HardtP14, MitgliakisCJ13} which divides samples into different blocks and applies power iteration to the empirical estimate from each block. See Table~\ref{tab:comp} for the comparison.

We stress that some of the results we compare to make different assumptions than Definition~\ref{def:streaming-PCA}. The bounds stated for them are our best attempt to adapt their bounds in the setting of Definition~\ref{def:streaming-PCA} (which is quite standard). The next paragraph provides a simple example, which demonstrates the improvement in our result as compared to existing work.
%

Let $\At{i}=\x_i \x_i^{\top}$, where $\x_i\in \R^d$ and $\x_i=\e_1$ with probability $1/d$ and $\x_i=\sigma \e_j, 1\leq j\leq d$ with probability $1/d$ where $\e_j$ denotes the $j^{\textrm{th}}$ standard basis vector and $\sigma<1$. Note that $\Cov=\E{\At{i}}=\frac{(1-\sigma^2)}{d}\e_1\e_1^T+\frac{1}{d}\sigma^2 \eye$, $\|\At{i}\|_2\leq 1$ for all $i$, and $\|\E{\At{i}\At{i}^{\top}}\|_2\leq \frac{1}{d}$. Even for constant accuracy $\epsilon = \Omega(1)$, Theorem~\ref{thm:intro1} tells us that $n = \order{\frac{d \log^2 d}{(1-\sigma^2)^2}}$ is sufficient. On the other hand, Theorem $1$ of \cite{SaOR15} requires $n = \order{\frac{d^2 \log^2 d}{(1-\sigma^2)^2}}$, while Theorem $2.4$ of \cite{HardtP14} requires $n = \order{\frac{d \log^2 d}{(1-\sigma^2)^3}}$. Asymptotically, as $n$ becomes larger, our error scales as $\order{\frac{d}{(1-\sigma^2)^2 }\cdot \frac{1}{n}}$ while that of~\cite{SaOR15} scales as $\order{\frac{d^2}{(1-\sigma^2)^2 }\cdot \frac{\log n}{n}}$ and that of~\cite{HardtP14} scales as $\order{\frac{d}{(1-\sigma^2)^3 }\cdot \frac{\log n}{n}}$. Combining matrix Bernstein and Wedin's theorems gives an asymptotic error of $\order{\frac{d \log d}{(1-\sigma^2)^2 }\cdot \frac{1}{n}}$.
%
\begin{table}
\centering
\begin{tabular}[h]{|C{5cm}|c|c|}
\hline
Algorithm & Error & $\order{d}$ space? \\
\hline
Oja's \textbf{(this work, Theorem~\ref{thm:main})} & $\order{\frac{\variance }{(\lambda_1-\lambda_2)^2} \cdot \frac{1}{n}}$ & Yes \\
\hline
Matrix Bernstein + Wedin's theorem (Theorem~\ref{thm:bernstein_kahan}) & $\order{\frac{\variance\log d}{(\lambda_1-\lambda_2)^2}\cdot \frac{1}{n}}$ 
& No \\
\hline
Alecton \cite{SaOR15} & $\order{\frac{\variance d}{(\lambda_1-\lambda_2)^2}\cdot \frac{\log n}{n}}$
& Yes \\
\hline
Block Power Method \cite{HardtP14} & $\order{\frac{\variance \lambda_1 \log d}{(\lambda_1-\lambda_2)^3} \cdot \frac{\log n}{n}}$
& Yes \\
\hline
\end{tabular}
\caption{Asymptotic error guaranteed by various methods under assumptions of Definition~\ref{def:streaming-PCA} with at least constant probability, and ignoring constant factors. Recall that the error is defined as $\sin^2(\w, \v_1) = 1-(\w^\top \v_1)^2$. Our analysis provides the optimal $1/n$ error decay rate as compared to Alecton and Block power method which obtain $\frac{\log n}{n}$. Moreover, our bound is $O(d)$ tighter than that of Alecton \cite{SaOR15} and $O(\frac{\lambda_1}{\lambda_1-\lambda_2})$ tighter bound than that of Block Power Method \cite{HardtP14}. The assumptions made in \cite{SaOR15} for Alecton are different from our (more standard) assumption; we have optimized their bounds are optimized in our setting. See Section~\ref{sec:comp} for a concrete example where our analysis provides these improvements over \cite{SaOR15,HardtP14}.}\label{tab:comp}
\end{table}

\subsection{Additional Related Work}
Existing results for computing largest eigenvector of a data covariance matrix using streaming samples can be divided into three broad settings: a) stochastic data, b) arbitrary sequence of data, c) regret bounds for arbitrary sequence of data.

{\bf Stochastic data}: Here, the data is assumed to be sampled i.i.d. from a fixed distribution. The analysis of Oja's algorithm as well as those of block power method and Alecton mentioned earlier are in this setting. \cite{MitgliakisCJ13} also obtained a result in the restricted spiked covariance model.
\cite{BalsubramaniDF13}  provides an analysis of a modification of Oja's algorithm but with an extra $O(d^5)$ multiplicative factor compared to ours.~\cite{jin2015robust} provides an algorithm based on shift and invert framework that obtains the same asymptotic error as ours. However, their algorithm requires warm start with a vector that is already constant close to the top eigenvector, which itself is a hard problem.

{\bf Arbitrary data}: In this setting, each row of the data matrix is provided in an arbitrary order. Most of the existing methods here first compute a sketch of the matrix and use that to compute an estimate of the top eigenvector \cite{ClarksonW09, Liberty13, nelson2013osnap, cohen2015optimal, GhashamiLPW15, BoutsidisGKL15}. However, a direct application of such techniques to the stochastic setting leads to sample complexity bounds which are larger by a multiplicative factor of $O(d)$ (ignoring other factors like variance etc). Finally, \cite{Shamir15, garber2015fast,jin2015robust} also provide methods for eigenvector computation, but they require multiple passes over the data and hence do not apply to the streaming setting.

{\bf Regret bounds}: Here, at each step the algorithm has to output an estimate $\w$ of $\v_1$ for which we get reward of $\w^T \A_i \w$ and the goal is to minimize the regret w.r.t. $\v_1$. The algorithms in this regime are mostly based on online convex optimization and applying them in our setting would again result in a loss of multiplicative $O(d)$. Moreover, typical algorithms in this setting are not memory efficient \cite{WarmuthK06, GarberHM15}.


\subsection{Notation}\label{sec:notation}
Bold lowercase letters such as $\u,\v,\w$ are used to denote vectors
and bold uppercase letters such as $\A,\B,\C$ to denote matrices. For
symmetric matrices $\A$ and $\B$, $\A \preceq \B$  denotes the
condition that $\x^\top \A \x \leq \x^\top \B \x$ for all $\x$ and
define $\B \succeq \A$ analogously. A symmetric matrix $\A$
is positive semidefinite if $\A \succeq \mZero$. For symmetric matrices
$\A, \B$, define their inner product as
$\iprod{\A}{\B} \eqdef \trace{\A^\top \B}$.

\subsection{Paper Organization}
The rest of this paper is organized as
follows. Section~\ref{sec:preliminaries} introduces  basic
mathematical facts used throughout the paper and also provides a proof
of the error bound of the standard batch method
(Theorem~\ref{thm:bernstein_kahan}). Section~\ref{sec:approach}
provides an overview of our approach to analyzing Oja's algorithm and
provides the main technical result of the paper. This technical result
is used in Section~\ref{sec:results} to prove the running time for
Oja's algorithm and to justify the choice of step size.
Section~\ref{sec:proofs} presents the proof of the main technical
result. Section~\ref{sec:conc} concludes  and mentions a few interesting future directions. 

\section{Preliminaries}
\label{sec:preliminaries}

The following basic inequalities regarding power
series, the exponential, and PSD matrices are used throughout. The
facts are summarized here:

\begin{lemma}[Basic Inequalities]\label{lem:trace-cauchy-schwarz}
The following are true:
\begin{itemize}
	\item	$1+x \leq \exp(x)$ for all $x$
	\item	$1+x \geq \exp\left(x-x^2\right)$ for all $x \geq 0$
	\item	$\frac{1}{1 + x} \leq \sum_{i=1}^{\infty} \frac{1}{(x+i)^2} \leq \frac{1}{x}$
	\item   $\iprod{\A}{\B} \leq \iprod{\A}{\C}$ for PSD matrices $\A, \B, \C$ with $\B \preceq \C$ 
	\item	$\trace{\A^\top \B} \leq \frac{1}{2} \trace{\A^\top \A + \B^\top \B}$ for all matrices $\A,\B \in \R^{m \times n}$.
\end{itemize}

\begin{proof}
The first inequality follows from the Taylor expansion of $\exp(x)$. The second comes from $1 + 0 = \exp(0 - 0^2)$ and $\frac{d}{dx} (1 + x) \leq \frac{d}{dx} \exp(x - x^2)$ for $x \geq 0$. The third follows by considering upper and lower Riemann sums of $\int_{y=1}^{\infty} 1/(x+y)$.
The fourth from the fact that since $\A$ is PSD there is a matrix $\D$ with $\D^\top \D = \A$ and therefore 
\[
\iprod{\A}{\B} = \trace{\A^\top \B} = \trace{\D \B \D^\top}
\leq \trace{\D \C \D^\top} = \iprod{\A}{\C} ~.
\]
The final follows from Cauchy Schwarz and Young's inequality, i.e. $x\cdot y \leq \frac{1}{2}(x^2 + y^2)$ as
\[
\trace{\B^\top \A}
= \sum_{i \in [n]} 1_i \B^\top \A 1_i
\leq \sum_{i \in [n]} \twonorm{\A 1_i} \cdot \twonorm{\B 1_i} 
\leq \frac{1}{2} \sum_{i \in [n]} \left(\twonorm{\A 1_i}^2 + \twonorm{\B 1_i}^2\right)
\]
\end{proof}

\end{lemma}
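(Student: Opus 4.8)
The plan is to prove each of the five bullets separately; none requires anything beyond one‑variable calculus and the definition of the PSD order, so the write‑up will be short and the real question is only how to organize it cleanly.

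\emph{The exponential bounds.} For $1+x\le\exp(x)$ I would invoke convexity of $\exp$: the affine map $x\mapsto 1+x$ is precisely the tangent line to $\exp$ at the origin, and a convex function lies above each of its tangents; equivalently, $g(x)=\exp(x)-1-x$ satisfies $g(0)=0$ and $g'(x)=\exp(x)-1$ changes sign from negative to positive at $0$, so $g\ge 0$. For the reverse bound $1+x\ge\exp(x-x^2)$ on $x\ge 0$, rather than comparing $1+x$ directly against $\exp(x-x^2)$ I would take logarithms and prove the scalar inequality $\ln(1+x)\ge x-x^2$. Setting $\phi(x)=\ln(1+x)-x+x^2$ one has $\phi(0)=0$ and $\phi'(x)=\tfrac{1}{1+x}-1+2x=\tfrac{x(1+2x)}{1+x}\ge 0$ for $x\ge 0$, so $\phi$ is nondecreasing and hence nonnegative on $[0,\infty)$; exponentiating gives the claim for all $x\ge 0$.

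\emph{The series bound.} For $\tfrac{1}{1+x}\le\sum_{i\ge 1}\tfrac{1}{(x+i)^2}\le\tfrac1x$ with $x\ge 0$, I would compare the sum to the integral of the decreasing function $t\mapsto(x+t)^{-2}$: monotonicity gives $\int_i^{i+1}(x+t)^{-2}\,dt\le (x+i)^{-2}\le\int_{i-1}^{i}(x+t)^{-2}\,dt$, and summing over $i\ge 1$ yields $\int_1^\infty(x+t)^{-2}\,dt\le\sum_{i\ge 1}(x+i)^{-2}\le\int_0^\infty(x+t)^{-2}\,dt$, where the two bounding integrals evaluate to $\tfrac{1}{1+x}$ and $\tfrac1x$.

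\emph{The matrix facts.} For $\iprod{\A}{\B}\le\iprod{\A}{\C}$ when $\A\succeq\mZero$ and $\B\preceq\C$, I would factor $\A=\D^\top\D$ (say $\D=\A^{1/2}$) and use the cyclic property of trace: $\iprod{\A}{\B}=\trace{\D\B\D^\top}$ and likewise for $\C$, while congruence preserves positivity so $\D(\C-\B)\D^\top\succeq\mZero$ has nonnegative trace, giving $\trace{\D\B\D^\top}\le\trace{\D\C\D^\top}$. For the last inequality I would write $\trace{\A^\top\B}=\sum_i(\A\e_i)^\top(\B\e_i)\le\sum_i\twonorm{\A\e_i}\,\twonorm{\B\e_i}$ by Cauchy--Schwarz coordinatewise, then apply Young's inequality $uv\le\tfrac12(u^2+v^2)$ termwise and recognize $\sum_i\twonorm{\A\e_i}^2=\trace{\A^\top\A}$. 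There is no genuine obstacle here; the only step where a little care is needed is the range of validity in the second bullet, which is exactly why I would route it through the monotonicity of $\phi$ on all of $[0,\infty)$ rather than a direct derivative comparison, whose right‑hand derivative $(1-2x)\exp(x-x^2)$ is less transparent.
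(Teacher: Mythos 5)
Your proposal is correct, and for four of the five bullets it is essentially the paper's own argument: Taylor expansion/convexity for $1+x\le\exp(x)$, an integral (Riemann-sum) comparison for $\frac{1}{1+x}\le\sum_{i\ge1}(x+i)^{-2}\le\frac1x$, the factorization $\A=\D^\top\D$ together with cyclicity of the trace and positivity of the congruence $\D(\C-\B)\D^\top$ for the inner-product monotonicity, and columnwise Cauchy--Schwarz plus Young's inequality for $\trace{\A^\top\B}\le\frac12\trace{\A^\top\A+\B^\top\B}$. The one place you genuinely diverge is the second bullet: the paper compares $1+x$ and $\exp(x-x^2)$ directly through their common value at $0$ and a derivative comparison, which amounts to checking $(1-2x)\exp(x-x^2)\le 1$ for $x\ge0$ (a true but not entirely obvious fact; note the paper's one-line justification actually writes this inequality with the sign reversed, which as stated would prove the wrong direction), whereas you take logarithms and show that $\phi(x)=\ln(1+x)-x+x^2$ is nondecreasing on $[0,\infty)$ because $\phi'(x)=\frac{x(1+2x)}{1+x}\ge0$. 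Your route buys a slightly more transparent verification, replacing the comparison of $(1-2x)\exp(x-x^2)$ against $1$ by an elementary sign check of a rational function; both arguments are valid and yield exactly the same statement, so the difference is one of bookkeeping rather than substance.
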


The following is a matrix Bernstein based proof of the error bound of the batch method. 
\begin{proof}[Proof of Theorem~\ref{thm:bernstein_kahan}]
Using Theorem 1.4 of \cite{Tropp12}, we have (w.p. $\geq 1-\delta$): 
\begin{equation}
\twonorm{\frac{1}{n}\sum_{i=1}^n \At{i} - \Cov } \le 2 \cdot \max\left\{\sqrt{\frac{\variance}{n}\log \frac{d}{\delta}}, \frac{\upperbound}{n}\log \frac{d}{\delta}\right\}.\label{eq:mb_1}
\end{equation}
Let $\widehat{\v}$ be the top eigenvector of $\widehat{\Cov}=\frac{1}{n}\sum_{i=1}^n \At{i}$. Using Wedin's theorem \cite{wedin1972perturbation}, implies:
\begin{equation}
\sin^2\langle\v_1, \widehat{\v}\rangle
\le \frac{\twonorm{\frac{1}{n}\sum_{i=1}^n \At{i} - \Cov}^2}{|\lambda_1-\lambda_2|^2 }. \label{eq:mb_2}
\end{equation}
Theorem now follows by combining \eqref{eq:mb_1} and \eqref{eq:mb_2}. \end{proof}


\section{Approach}\label{sec:approach}

Let us now describe the approach to analyze Oja's algorithm. 
We provide our main theorem regarding the convergence rate of Oja's algorithm and discuss how it is proved. 
The details of the proof are deferred to Section~\ref{sec:proofs} and the use of the theorem to choose step sizes is in Section~\ref{sec:results}.

One of the primary difficulties in analyzing Oja's algorithm, or more broadly any algorithm for streaming PCA, is choosing a subtle potential function to analyze the  method. If we try to analyze the progress of Oja's algorithm in every iteration $i$, by measuring the quality of $\w_i$, we run the risk that during the first few iterations of Oja's algorithm a step may actually yield a $\w_{i + 1}$ that is orthogonal to $\v_{i}$. If this happens, even in the typical best case, where all future samples are $\Cov$ itself, we would still fail to converge. In short, if we do not account for the randomness of $\w_0$ in our potential function then it is difficult to show that a rapidly convergent algorithm does not catastrophically fail.

Rather than analyzing the convergence of $\w_i$ directly we instead analyze the convergence of Oja's algorithm as an operator on $\w_0$.  Oja's algorithm simply considers the matrix 
\begin{equation}
\label{eq:onestep-1}
\B_{n} \defeq (\eye + \eta_n \A_n )(\eye + \eta_{n - 1} \A_{n - 1}) \cdots (\eye + \eta_1 \A_1)
\end{equation}
and outputs the normalized result of applying this matrix, $\B_{n}$, to the random initial vector, i.e.
\begin{equation}
\label{eq:onestep-2}
\w_n  = \frac{\B_n \w_0}{\| \B_n \w_0 \|_2} ~.
\end{equation}
Rather than analyze the improvement of $\w_{n + 1}$ over $\w_n$ we analyze $\B_{n + 1}$'s improvement over $\B_n$.

Another interpretation of \eqref{eq:onestep-1} and \eqref{eq:onestep-2} is that Oja's algorithm simply approximates $\v_n$ by performing 1 step of the power method on the matrix $\B_{n}$. Fortunately, analyzing when 1 step of the power method succeeds is fairly straightforward as we show below:

\begin{lemma}[One Step Power Method]\label{lem:onesteppowermethod}
Let $\B \in \R^{d \times d}$, let $\vtilde \in \R^{d}$ be a unit vector, and let $\Vtildeperp$ be a matrix whose columns form an orthonormal basis of the subspace orthogonal to $\vtilde$. If $\w \in \R^{d}$ is chosen uniformly at random from the surface of the unit sphere then with probability at least $1-\delta$
	\begin{align*}
	\sin^2\left(\vtilde, \frac{\B\w}{\|\B\w\|_2}\right)=1-\left( \frac{\vtilde^\top\B \w}{\twonorm{\B \w}}\right)^2\leq \frac{C\log \left(1/\delta\right)}{\delta}\frac{\trace{\VtildeperpTr\B\trans{\B}\Vtildeperp}}{\trans{\vtilde}\B\trans{\B}\vtilde}
	\end{align*}
where $C$ is an absolute constant.
\end{lemma}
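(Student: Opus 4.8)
The plan is to reduce the bound on $\sin^2$ to a statement about a ratio of two quadratic forms in the random vector $\w$, and then control numerator and denominator separately using the fact that $\w$ is uniform on the sphere. First I would write $\sin^2(\vtilde, \B\w/\|\B\w\|_2) = \|\VtildeperpTr \B \w\|_2^2 / \|\B\w\|_2^2$, using that the columns of $\Vtildeperp$ together with $\vtilde$ form an orthonormal basis, so $\|\B\w\|_2^2 = (\vtilde^\top \B\w)^2 + \|\VtildeperpTr \B\w\|_2^2$. Hence it suffices to show
\begin{align*}
\frac{\|\VtildeperpTr \B \w\|_2^2}{(\vtilde^\top \B \w)^2} \leq \frac{C\log(1/\delta)}{\delta} \cdot \frac{\trace{\VtildeperpTr \B \B^\top \Vtildeperp}}{\vtilde^\top \B \B^\top \vtilde}
\end{align*}
with probability at least $1-\delta$, since dropping the extra $\|\VtildeperpTr\B\w\|^2$ term in the denominator on the left only makes the left side larger.

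Next I would handle the two pieces. For the numerator, $\E{\|\VtildeperpTr \B\w\|_2^2} = \E{\w^\top \B^\top \Vtildeperp \VtildeperpTr \B \w}$; since $\w$ uniform on the unit sphere in $\R^d$ has $\E{\w\w^\top} = \frac{1}{d}\eye$, this equals $\frac1d \trace{\VtildeperpTr \B \B^\top \Vtildeperp}$. By Markov's inequality, with probability at least $1 - \delta/2$ we have $\|\VtildeperpTr \B\w\|_2^2 \leq \frac{2}{\delta d}\trace{\VtildeperpTr \B\B^\top\Vtildeperp}$. For the denominator, I need a lower bound on $(\vtilde^\top \B\w)^2$, i.e. an anti-concentration bound: the scalar $\vtilde^\top \B \w$ is, up to the scaling $\|\B^\top \vtilde\|_2 = \sqrt{\vtilde^\top\B\B^\top\vtilde}$, the inner product of a fixed unit vector with a uniform point on the sphere, which behaves like a (roughly Gaussian) coordinate of magnitude $\Theta(1/\sqrt d)$. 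The standard small-ball estimate gives $\Pbb\!\left[ (\vtilde^\top \B\w)^2 \leq \frac{c\,\delta^{?}}{d}\,\vtilte^\top\B\B^\top\vtilde \right]$ small; more precisely one can show $\Pbb[|\vtilde^\top\B\w| \leq \frac{t}{\sqrt d}\|\B^\top\vtilde\|_2] \lesssim t$ for a uniform spherical $\w$, so choosing $t \asymp \delta$ makes this at most $\delta/2$. Then a union bound over the two events gives, with probability $\geq 1-\delta$, that the ratio is at most $\frac{2/(\delta d)}{c\delta^2/d} \cdot \frac{\trace{\VtildeperpTr\B\B^\top\Vtildeperp}}{\vtilde^\top\B\B^\top\vtilde}$, which is of the claimed form $\frac{C\log(1/\delta)}{\delta}(\cdots)$ — in fact one gets a cleaner $1/\delta^3$; the $\log(1/\delta)/\delta$ form in the statement presumably comes from a sharper two-sided concentration argument for the numerator (Bernstein/Hanson–Wright-type on the sphere giving a $\log(1/\delta)$ deviation instead of Markov's $1/\delta$) combined with a constant-probability anti-concentration for the denominator.

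The main obstacle I expect is the anti-concentration (small-ball) lower bound on $(\vtilde^\top \B\w)^2$: while morally it is "a Gaussian coordinate is $\Omega(1/\sqrt d)$ with constant probability," making this rigorous for the uniform distribution on the sphere in a way that yields the right dependence on $\delta$ requires either an explicit computation with the density of a single spherical coordinate (which is $\propto (1-x^2)^{(d-3)/2}$ on $[-1,1]$) or a comparison with the Gaussian via the representation $\w = \vg/\|\vg\|_2$ with $\vg \sim N(0,\eye)$. Everything else — the decomposition of $\sin^2$, the expectation computation for the numerator, and the union bound — is routine. A secondary point to get right is matching the precise $\frac{C\log(1/\delta)}{\delta}$ factor; if a plain Markov bound is used for the numerator one only gets $1/\delta^2$ or $1/\delta^3$ overall, so to land exactly on $\log(1/\delta)/\delta$ I would use a concentration inequality for $\|\VtildeperpTr\B\w\|_2^2$ (a Lipschitz function of $\w$ on the sphere, or a quadratic form) to replace one factor of $1/\delta$ with $\log(1/\delta)$.
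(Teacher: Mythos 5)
Your plan is essentially the paper's own route: reduce $\sin^2$ to the ratio of the two quadratic forms, lower bound the denominator by the component along $\vtilde$, then use concentration for the numerator and anti-concentration (small ball) for the denominator. The paper executes this through the representation $\w=\vg/\twonorm{\vg}$ with $\vg\sim N(0,\eye)$ (so the normalization cancels in the ratio), treats the numerator $\vg^\top\B^\top(\eye-\vtilde\vtilde^\top)\B\vg$ as a weighted $\chi^2$ variable whose $(1-\delta)$-quantile is $C\log(1/\delta)$ times its mean $\trace{\VtildeperpTr\B\trans{\B}\Vtildeperp}$, and treats $\vtilde^\top\B\vg$ as a one-dimensional Gaussian with variance $\trans{\vtilde}\B\trans{\B}\vtilde$. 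Your spherical small-ball estimate $\Pbb[\abs{\vtilde^\top\B\w}\leq t\twonorm{\B^\top\vtilde}/\sqrt d]\lesssim t$ is correct and equivalent to the Gaussian one, the factors of $d$ cancel as you note, and your Markov bound on the numerator is valid; so your argument as written does prove a version of the lemma, but with $C/\delta^3$ in place of $C\log(1/\delta)/\delta$.

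The genuine gap is in your closing suggestion for recovering the stated factor. Replacing Markov by a $\chi^2$/Hanson--Wright bound does buy the $\log(1/\delta)$, but a ``constant-probability anti-concentration'' for the denominator would only give constant overall success probability: to keep confidence $1-\delta$ you are forced to take the small-ball radius $t\asymp\delta$ (since $\Pbb[\abs{g}\leq t\sigma]\asymp t$ for a Gaussian or a spherical marginal), which costs $\delta^2$ in the denominator and lands you at $C\log(1/\delta)/\delta^2$, not $C\log(1/\delta)/\delta$. In fact no argument can do better: with $d=2$, $\vtilde=\e_1$, $\B=\mathrm{diag}(1,\epsilon_0)$ and $\epsilon_0$ small, the quantity $\sin^2$ divided by the trace ratio is essentially $(w_2/w_1)^2$, the square of a Cauchy variable, whose $(1-\delta)$-quantile grows like $1/\delta^2$; so the $\delta$-dependence claimed in the lemma statement is itself too strong for an absolute constant $C$. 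The paper's proof has the same slack at its step $\zeta_1$ (a Gaussian square exceeds $\delta\sigma^2/C_1$ only with probability $1-O(\sqrt{\delta})$, not $1-O(\delta)$), so what it rigorously establishes is the $\log(1/\delta)/\delta^2$ form. Since the lemma is only invoked downstream at constant $\delta$ (the main theorems claim success probability $3/4$), this discrepancy --- like your weaker $1/\delta^3$ --- changes only the polynomial-in-$1/\delta$ factor, not the sample-complexity conclusions; but you should not present the $\log(1/\delta)/\delta$ factor as something your outlined sharpening would deliver.
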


\begin{proof}
As $\w$ is distributed uniformly over the sphere, we have: $\w=\vg/\|\vg\|_2$ where $\vg\sim N(0, I)$. Consequently, with probability at least $1-\delta$
\begin{align*}
1-\left(\frac{\vtilde^\top\B \w}{\twonorm{\B \w}}\right)^2&=\frac{\vg^\top \B^\top (\eye-\vtilde\vtilde^\top)\B\vg}{\vg^\top \B^\top \B \vg}\stackrel{\zeta_1}{\leq}\frac{C_1}{\delta} \frac{\vg^\top \B^\top (\eye-\vtilde\vtilde^\top)\B\vg}{\vtilde^\top\B\B^\top\vtilde}\\
&\stackrel{\zeta_2} {\leq}\frac{C\log(1/\delta)}{\delta} \frac{\trace{\B^\top (\eye-\vtilde\vtilde^\top)\B}}{\vtilde^\top\B\B^\top\vtilde},
\end{align*}
where $C_1$ and $C$ are absolute constants. $\zeta_1$ follows as $\vg^\top\B^\top\B\vg\geq (\vtilde^\top\B\vg)^2\geq \frac{1}{C_1}\vtilde^\top\B\B^\top\vtilde$ where the second inequality follows from the fact that $\vtilde^\top\B\vg$ is a Gaussian random variable with variance $\twonorm{\B^\top \vtilde}^2$. Similarly, $\zeta_2$ follows from the fact that$\vg^\top \B^\top (\eye-\vtilde\vtilde^\top)\B\vg$ is a $\chi^2$ random variable with $\trace{\B^\top (\eye-\vtilde\vtilde^\top)\B}$-degrees of freedom. 
\end{proof}

This lemma makes our goal clear. To show that Oja's algorithm succeeds we simply need to show that with constant probability $\v_1^\top \B_n \B_n^\top \v_1$ is relatively large and $\trace{\Vperp \B_n \B_n^\top \Vperp}$ is relatively small, where $\Vperp$ is a matrix whose columns form an orthonormal basis of the subspace orthogonal to $\v_1$. This immediately alleviates the issues of catastrophic failure that plagued analyzing $\w_n$. So long as we pick $\eta_i$ sufficiently small, i.e. $\eta_i = O(1/\max\{\upperbound,\lambda_1\})$ then $\eye + \eta_i \A_i$ is invertible. In this case $\B_n \B_n^\top$ is invertible and $\v_1^\top \B_n \B_n^\top \v_1 > 0$. In short, so long as we pick $\eta_i$ sufficiently small the quantity we wish to bound $\trace{\Vperp \B_n \B_n^\top \Vperp} / \v_1^\top \B_n \B_n^\top \v_1$ is always finite.

To actually bound $\v_1^\top \B_n \B_n^\top \v_1$ and
$\trace{\Vperp^{\top} \B_n \B_n^\top \Vperp}$ we split the analysis
into several parts in Section~\ref{sec:proofs}. First, we show that
$\E{\trace{\Vperp^{\top} \B_n \B_n^\top \Vperp}}$ is small, which
implies by Markov's inequality that $\trace{\Vperp^{\top} \B_n
  \B_n^\top \Vperp}$ is small with constant probability. Then, we show
that $\Eop \v_1^\top \B_n \B_n^\top \v_1$ is large and that $\Var{
  \v_1^\top \B_n \B_n^\top \v_1}$ is small. By Chebyshev's inequality
this implies that ${\v_1^\top \B_n \B_n^\top \v_1}$ is large with
constant probability. Putting these together we achieve the main
technical result regarding the analysis of Oja's method. Once we
devise this roadmap, the proof is fairly straightforward.  

\begin{theorem}[Oja's Algorithm Convergence Rate] \label{thm:main1}
Let $\delta > 0$ and step sizes $\eta_i \leq \frac{1}{4\cdot \max\{M,\lambda_1\}}$. 	The output $\wt{n}$ of Algorithm~\ref{algo:oja} is an $\epsilon$-approximation to $\v_1$ with probability at least  $1 - \delta$ where
\begin{align*}
\epsilon &\leq \frac{1}{Q} \exp\left(5 \overline{\variance}\sum_{i\in[n]}\eta_i^2\right)
\left(d \cdot \exp\left(-2(\lambda_1-\lambda_2)\sum_{i \in [n]} \eta_i  \right) + \variance \sum_{i=1}^{n} \eta_i^2 \exp
\left(-\sum_{j =i+1 }^n 2\eta_j(\lambda_1-\lambda_2) \right)\right)
\end{align*}
 where $Q \eqdef \frac{\delta^2}{C\log(1/\delta)}\left( 1 - \frac{1}{\sqrt{\delta}}\sqrt{ \exp\left(18\overline{\variance}\sum_{i=1}^{n} \eta_i^2 \right) -  1}\right)$, $\overline{\variance} \eqdef \variance+\lambda_1^2$, and $C$ is an absolute constant.
\end{theorem}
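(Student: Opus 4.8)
The plan is to reduce the claim to Lemma~\ref{lem:onesteppowermethod}, applied with $\B=\B_n$ (the product matrix of~\eqref{eq:onestep-1}) and $\vtilde=\v_1$: up to the $\frac{C\log(1/\delta)}{\delta}$ factor it produces, $\sin^2(\w_n,\v_1)$ is at most $\trace{\Vperp^\top\B_n\B_n^\top\Vperp}\,/\,(\v_1^\top\B_n\B_n^\top\v_1)$. So it suffices to (i) upper bound $\E{\trace{\Vperp^\top\B_n\B_n^\top\Vperp}}$ and apply Markov to control the numerator; (ii) lower bound $\E{\v_1^\top\B_n\B_n^\top\v_1}$; (iii) upper bound $\Var{\v_1^\top\B_n\B_n^\top\v_1}$ and apply Chebyshev to control the denominator; and (iv) combine (i)--(iii) with a union bound, which is where the prefactor $1/Q$ arises.

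Steps (i)--(ii) run off one identity: since $\B_n=(\eye+\eta_n\A_n)\B_{n-1}$ with $\A_n$ independent of $\B_{n-1}$ and $\E{\A_n}=\Cov$,
\[
\E{\B_n\B_n^\top\mid\B_{n-1}} = (\eye+\eta_n\Cov)\,\B_{n-1}\B_{n-1}^\top\,(\eye+\eta_n\Cov) + \eta_n^2\,\E{(\A_n-\Cov)\B_{n-1}\B_{n-1}^\top(\A_n-\Cov)^\top\mid\B_{n-1}} .
\]
Writing $G_n=\E{\B_n\B_n^\top}$, the drift commutes with the spectral projectors of $\Cov$, so it multiplies $\v_1\v_1^\top$ by $(1+\eta_n\lambda_1)^2$ and $\eye-\v_1\v_1^\top$ by at most $(1+\eta_n\lambda_2)^2$; the noise term is PSD and its inner product with $\eye-\v_1\v_1^\top$ is at most $\eta_n^2\variance\,\trace{G_{n-1}}$ by the third assumption of Definition~\ref{def:streaming-PCA} together with the trace-monotonicity inequality of Lemma~\ref{lem:trace-cauchy-schwarz}. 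Since $\B_0=\eye$, unrolling gives $\v_1^\top G_n\v_1\ge\prod_i(1+\eta_i\lambda_1)^2$ for the denominator, and for the numerator
\[
\trace{\Vperp^\top G_n\Vperp}\;\le\; d\prod_i(1+\eta_i\lambda_2)^2 \;+\; \sum_{i=1}^n\eta_i^2\,\variance\,\trace{G_{i-1}}\prod_{j>i}(1+\eta_j\lambda_2)^2 ,
\]
with $\trace{G_{i-1}}\le d\exp\big(\sum_{j\le i-1}(2\eta_j\lambda_1+\eta_j^2\overline{\variance})\big)$ obtained from the same recursion using $(\eye+\eta_j\Cov)^2\preceq(1+\eta_j\lambda_1)^2\eye$. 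Converting each factor $(1+\eta\lambda)^2$ into an exponential via $1+x\le e^x$ and $1+x\ge e^{x-x^2}$ (Lemma~\ref{lem:trace-cauchy-schwarz}), the quotient of these bounds is the parenthesized expression in the statement times $\exp(5\overline{\variance}\sum_i\eta_i^2)$; the $O(\lambda_1^2\sum_i\eta_i^2)$ slack from the conversion is absorbed into this prefactor, which stays bounded because $\eta_i\le\frac{1}{4\max\{\upperbound,\lambda_1\}}$ keeps $\overline{\variance}\sum_i\eta_i^2$ controlled.

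Step (iii) is the main obstacle. Unlike the first moment, the one-step update of $\E{(\v_1^\top\B_n\B_n^\top\v_1)^2}$ — obtained by conditioning on $\B_{n-1}$ and writing $\A_n=\Cov+(\A_n-\Cov)$ — is a genuine degree-four expression in $\A_n$ that does not factor. I propagate instead the fourth-order matrix $\E{\B_n\B_n^\top\v_1\v_1^\top\B_n\B_n^\top}$, whose $\v_1\v_1^\top$ coordinate is exactly the second moment. The decisive structure: (a) the $\Cov$-only term reproduces $(1+\eta_n\lambda_1)^4(\v_1^\top\B_{n-1}\B_{n-1}^\top\v_1)^2$, i.e. the square of the conditional mean to leading order; (b) terms linear in $\A_n-\Cov$ have zero conditional expectation; (c) terms with two, three or four copies of $\A_n-\Cov$ carry $\eta_n^2,\eta_n^3,\eta_n^4$, and because $\eta_n\twonorm{\A_n-\Cov}\le\tfrac14$ the cubic and quartic parts are dominated by the quadratic ones, which the third assumption of Definition~\ref{def:streaming-PCA} bounds in terms of $\variance$ (and of $\lambda_1^2$ through the $\Cov$-factors), contributing only $O(\eta_n^2\overline{\variance})$ times lower-order moments. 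Dividing by the square of the recursion for $\E{\v_1^\top\B_n\B_n^\top\v_1}$, this shows the \emph{relative} second moment inflates by $1+O(\eta_i^2\overline{\variance})$ per step; since $\B_0=\eye$ is deterministic, telescoping gives $\Var{\v_1^\top\B_n\B_n^\top\v_1}\big/\big(\E{\v_1^\top\B_n\B_n^\top\v_1}\big)^2\le\exp\big(18\,\overline{\variance}\sum_{i=1}^n\eta_i^2\big)-1$. Taming this degree-four recursion so that only that inflation survives (and, in particular, so that the $\upperbound$-dependence stays confined to the hypothesis $\eta_i\le\frac{1}{4\max\{\upperbound,\lambda_1\}}$) is where most of the technical work sits.

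Finally I combine. Markov gives $\trace{\Vperp^\top\B_n\B_n^\top\Vperp}\le\E{\trace{\Vperp^\top\B_n\B_n^\top\Vperp}}/\delta$ with probability $1-\delta$, and Chebyshev with the variance bound gives $\v_1^\top\B_n\B_n^\top\v_1\ge\E{\v_1^\top\B_n\B_n^\top\v_1}\big(1-\tfrac{1}{\sqrt{\delta}}\sqrt{\exp(18\overline{\variance}\sum_i\eta_i^2)-1}\big)$ with probability $1-\delta$; on the intersection of these with the event of Lemma~\ref{lem:onesteppowermethod}, the ratio appearing there is at most
\[
\dfrac{\E{\trace{\Vperp^\top\B_n\B_n^\top\Vperp}}}{\delta\,\E{\v_1^\top\B_n\B_n^\top\v_1}\,\big(1-\tfrac{1}{\sqrt{\delta}}\sqrt{\exp(18\overline{\variance}\sum_i\eta_i^2)-1}\big)} .
\]
Feeding this into $\sin^2(\w_n,\v_1)\le\frac{C\log(1/\delta)}{\delta}\cdot(\text{ratio})$, collecting the $\delta$- and $\overline{\variance}$-dependent scalars into $Q$, and substituting the first-moment bounds from step (ii) yields the stated inequality after a union bound over the at most three failure events.
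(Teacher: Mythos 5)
Your roadmap is exactly the paper's (reduce to Lemma~\ref{lem:onesteppowermethod}, Markov on $\trace{\Vperp^\top\Bt{n}\BtTr{n}\Vperp}$, Chebyshev on $\v_1^\top\Bt{n}\BtTr{n}\v_1$, union bound), and your steps (ii) and (iv) are fine. However, your execution of step (i) does not prove the stated bound: you control the noise term of the trace recursion by $\eta_n^2\variance\,\trace{\E{\Bt{n-1}\BtTr{n-1}}}$ and then bound $\trace{\E{\Bt{i-1}\BtTr{i-1}}}\leq d\exp\bigl(\sum_{j<i}(2\eta_j\lambda_1+\eta_j^2\overline{\variance})\bigr)$. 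Unrolling and dividing by the denominator, your second term becomes $d\,\variance\sum_i\eta_i^2\exp\bigl(-2(\lambda_1-\lambda_2)\sum_{j>i}\eta_j\bigr)$ rather than $\variance\sum_i\eta_i^2\exp(\cdot)$; the loss is real (already at $i=1$ you pay $\eta_1^2\variance\,\trace{\eye}=\eta_1^2\variance d$ where the true contribution is $O(\eta_1^2\variance)$), and that extra factor of $d$ on the leading $1/n$ term is precisely the loss relative to matrix Bernstein that the theorem is designed to avoid. The paper's Lemma~\ref{lem:traceboundE} dodges this by working on the dual side: $\E{(\At{t}-\Cov)\Vperp\Vperp^\top(\At{t}-\Cov)^\top}\preceq\variance\eye$, then $\eye$ is split as $\Vperp\Vperp^\top+\v_1\v_1^\top$, the $\Vperp\Vperp^\top$ piece is absorbed into the multiplicative coefficient of $\alpha_{t-1}$, and only the rank-one $\v_1\v_1^\top$ piece is paired with $\E{\Bt{t-1}\BtTr{t-1}}$, costing $\twonorm{\E{\Bt{t-1}\BtTr{t-1}}}$ (Lemma~\ref{lem:spectralboundE}) instead of the trace. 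You need this (or an equivalent rank-one accounting) to get the theorem as stated.

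Step (iii) also has a structural gap that you acknowledge but do not resolve. Peeling the outermost factor and conditioning on $\Bt{n-1}$, the terms quadratic in $\At{n}-\Cov$ in $\E{(\v_1^\top\Bt{n}\BtTr{n}\v_1)^2\mid\Bt{n-1}}$ involve $\v_1^\top(\Bt{n-1}\BtTr{n-1})^2\v_1=\twonorm{\Bt{n-1}\BtTr{n-1}\v_1}^2$, and by Cauchy--Schwarz this is at least $(\v_1^\top\Bt{n-1}\BtTr{n-1}\v_1)^2$ --- the comparison you would need to conclude a per-step relative inflation of $1+O(\eta_n^2\overline{\variance})$ goes the wrong way, so the recursion on the second moment alone does not close and the telescoping to $\exp\bigl(18\overline{\variance}\sum_i\eta_i^2\bigr)-1$ does not follow from your sketch. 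The paper's Lemma~\ref{lem:lowerboundV} closes it by peeling the \emph{innermost} factor instead: the second moment is written as $\E{\trace{\G^2}}$ for the rank-one conjugated matrix $\G=\WtsTr{t}{t-1}\v_1\v_1^\top\Wts{t}{t-1}$, and for every fixed symmetric $\G$ each second-, third-, and fourth-order noise term reduces to $\trace{\G\,(\mbox{PSD bounded by }c\,\eye)\,\G}\leq c\,\trace{\G^2}$ (via Lemma~\ref{lem:trace-cauchy-schwarz} and $\twonorm{\At{1}}\leq\upperbound+\lambda_1$), giving the factor $\exp(4\eta\lambda_1+10\eta^2\overline{\variance})$ and an induction that closes. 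Without that device (or another way to control $\E{\v_1^\top(\Bt{n-1}\BtTr{n-1})^2\v_1}$ without reintroducing $d$ or spectral-norm tail bounds), your variance bound, and hence the $Q$ in the statement, is not established.
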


Theorem~\ref{thm:main1} is proved in Section~\ref{sec:proofs}. Theorem~\ref{thm:main1} serves as the basis for our results regarding Oja's algorithm. In the next section we show how to use this theorem to choose step sizes and achieve the main results of this paper. 

\section{Main Results}\label{sec:results}

Theorem~\ref{thm:main1}, from the previous section, leads to our main results, provided here. The theorem and proof are below and essentially consist of choosing appropriate parameters to efficiently apply Theorem~\ref{thm:main1}. Once we have this theorem, Theorems~\ref{thm:intro1} and~\ref{thm:intro2} follow by choosing $\alpha = \log d$ and $\alpha = 6$ respectively.

\begin{theorem}\label{thm:main}
	Fix any $\delta > 0$ and suppose the step sizes are set to 
	$\eta_t = \frac{\alpha}{(\lambda_1-\lambda_2)(\beta+t)}$ for $\alpha > \frac{1}{2}$ and
	\begin{align*}
	\beta \eqdef 20\max\left(\frac{ \upperbound \alpha}{(\lambda_1-\lambda_2)},\frac{\left(\variance+\left(\lambda_{1}\right)^2\right) \alpha^2}{(\lambda_1-\lambda_2)^2\log \left(1+\frac{\delta}{100}\right)}\right).
	\end{align*}
	Suppose the number of samples $n > \beta$.
	Then the output $\wt{n}$ of Algorithm~\ref{algo:oja} satisfies:
	\begin{align*}
	1-(\wt{n}^\top \v_1)^2 \leq \frac{C\log(1/\delta)}{\delta^2}\left({d} \left(\frac{\beta}{n}\right)^{2 \alpha} + \frac{\alpha^2 \variance}{(2\alpha-1)(\lambda_1-\lambda_2)^2}\cdot \frac{1}{n}\right),
	\end{align*}
	with probability at least $1-\delta$. Here $C$ is an absolute numerical constant.
\end{theorem}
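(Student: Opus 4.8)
The plan is to instantiate Theorem~\ref{thm:main1} with the prescribed step sizes $\eta_t = \frac{\alpha}{(\lambda_1-\lambda_2)(\beta+t)}$ and then bound each quantity appearing in its guarantee. Everything reduces to estimating three elementary sums, $\sum_{i=1}^n \eta_i^2$, $\sum_{i=1}^n \eta_i$, and the weighted sum $\sum_{i=1}^n \eta_i^2 \exp(-2(\lambda_1-\lambda_2)\sum_{j=i+1}^n \eta_j)$, using only the power-series and exponential inequalities of Lemma~\ref{lem:trace-cauchy-schwarz} together with routine Riemann-sum comparisons. The constant $20$ and the two branches in the definition of $\beta$ are exactly engineered so these estimates come out clean.

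First I would record the consequences of $\sum_i \eta_i^2$ being small. By the third item of Lemma~\ref{lem:trace-cauchy-schwarz}, $\sum_{i=1}^n \eta_i^2 = \frac{\alpha^2}{(\lambda_1-\lambda_2)^2}\sum_{i=1}^n \frac{1}{(\beta+i)^2} \leq \frac{\alpha^2}{(\lambda_1-\lambda_2)^2\beta}$, and the second branch of the definition of $\beta$ (recall $\overline{\variance}=\variance+\lambda_1^2$) then forces $\overline{\variance}\sum_{i=1}^n \eta_i^2 \leq \frac{1}{20}\log(1+\frac{\delta}{100})$. From this I get, first, $\exp(18\,\overline{\variance}\sum_i\eta_i^2) - 1 \leq \frac{\delta}{100}$, so the parenthesized factor defining $Q$ in Theorem~\ref{thm:main1} is at least $1 - \frac{1}{\sqrt{\delta}}\sqrt{\delta/100} = \frac{9}{10}$ and hence $\frac{1}{Q} \leq \frac{10C\log(1/\delta)}{9\,\delta^2}$; and second, $\exp(5\,\overline{\variance}\sum_i\eta_i^2) = \order{1}$. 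I would also discharge the hypothesis $\eta_i \leq \frac{1}{4\max\{\upperbound,\lambda_1\}}$ of Theorem~\ref{thm:main1}: since $\eta_i \leq \frac{\alpha}{(\lambda_1-\lambda_2)\beta}$, the first branch of $\beta$ gives $\eta_i \leq \frac{1}{20\upperbound}$, and the second branch, using $\overline{\variance}\geq\lambda_1^2\geq(\lambda_1-\lambda_2)^2$ together with $\log(1+\frac{\delta}{100})\leq 1$, gives $\eta_i \leq \frac{1}{20\lambda_1}$.

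Next I would bound the two decay terms. For the first, the Riemann sum $\sum_{i=1}^n \frac{1}{\beta+i} \geq \int_1^{n+1}\frac{dx}{\beta+x} = \ln\frac{\beta+n+1}{\beta+1}$ gives
\[
d\exp\!\left(-2(\lambda_1-\lambda_2)\sum_{i=1}^n\eta_i\right) \;\leq\; d\left(\frac{\beta+1}{\beta+n+1}\right)^{2\alpha} \;\leq\; d\left(\frac{\beta}{n}\right)^{2\alpha} e^{2\alpha/\beta},
\]
and since $\beta$ is at least a fixed multiple of $\alpha^2$ (again by the second branch) the factor $e^{2\alpha/\beta}$ is an absolute constant. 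For the second term, the same estimate applied to $\sum_{j=i+1}^n\eta_j$ gives $\exp(-2(\lambda_1-\lambda_2)\sum_{j=i+1}^n\eta_j) \leq (\frac{\beta+i+1}{\beta+n+1})^{2\alpha} \leq \order{1}(\frac{\beta+i}{\beta+n})^{2\alpha}$, so
\[
\variance\sum_{i=1}^n \eta_i^2\,\exp\!\left(-2(\lambda_1-\lambda_2)\sum_{j=i+1}^n\eta_j\right)
\;\leq\; \order{1}\,\frac{\variance\,\alpha^2}{(\lambda_1-\lambda_2)^2(\beta+n)^{2\alpha}}\sum_{i=1}^n (\beta+i)^{2\alpha-2}
\;\leq\; \order{1}\,\frac{\variance\,\alpha^2}{(2\alpha-1)(\lambda_1-\lambda_2)^2}\cdot\frac{1}{\beta+n},
\]
where the last step uses $\alpha>\frac12$ and the integral comparison $\sum_{i=1}^n(\beta+i)^{2\alpha-2} \leq \order{1}\,\frac{(\beta+n)^{2\alpha-1}}{2\alpha-1}$ (handled separately for $\alpha\leq 1$ and $\alpha>1$). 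Using $\beta+n\geq n$ bounds this by $\order{1}\cdot\frac{\variance\,\alpha^2}{(2\alpha-1)(\lambda_1-\lambda_2)^2 n}$, and the hypothesis $n>\beta$ guarantees $(\beta/n)^{2\alpha}<1$ so the bound is non-vacuous. Substituting these three estimates and the bounds on $Q$ and $\exp(5\overline{\variance}\sum\eta_i^2)$ into Theorem~\ref{thm:main1} and collecting absolute constants then yields the stated inequality.

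I do not expect a genuine conceptual obstacle here, since all the analytic difficulty is carried by Theorem~\ref{thm:main1}; the real work is bookkeeping. The delicate points are: keeping the $\alpha$-dependence explicit throughout so that the second error term emerges exactly as $\frac{\alpha^2\variance}{(2\alpha-1)(\lambda_1-\lambda_2)^2 n}$ rather than hidden inside $\order{\cdot}$; checking that the single constant $20$ in the definition of $\beta$ is simultaneously large enough to (a) satisfy the step-size constraint, (b) keep $Q=\Theta(\delta^2/\log(1/\delta))$ and the prefactor $\exp(5\overline{\variance}\sum\eta_i^2)=\order{1}$, and (c) absorb the multiplicative slack (the $\frac{\beta+1}{\beta}$, $\frac{\beta+i+1}{\beta+i}$, and $e^{\order{\alpha/\beta}}$ factors) produced when sums are replaced by integrals; and noting that $n>\beta$ is precisely the regime in which the first term genuinely behaves like the lower-order $n^{-2\alpha}$ term.
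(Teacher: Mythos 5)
Your proposal is correct and follows essentially the same route as the paper's proof: instantiate Theorem~\ref{thm:main1} with $\eta_t = \frac{\alpha}{(\lambda_1-\lambda_2)(\beta+t)}$, use $\sum_i \eta_i^2 \leq \frac{\alpha^2}{(\lambda_1-\lambda_2)^2\beta}$ with the definition of $\beta$ to get $Q = \Theta(\delta^2/\log(1/\delta))$ and an $\order{1}$ prefactor, bound the two decay terms by integral comparisons (yielding $(\beta/n)^{2\alpha}$ and the $\frac{\alpha^2}{(2\alpha-1)}\cdot\frac{1}{n}$ term via $\sum_i(\beta+i)^{2\alpha-2}$ for $\alpha>\frac12$), and absorb constants. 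Your explicit verification of the hypothesis $\eta_i \leq \frac{1}{4\max\{\upperbound,\lambda_1\}}$ is a detail the paper leaves implicit, but otherwise the arguments coincide.
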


\begin{proof}
	Recall that Theorem~\ref{thm:main1} gives a bound of
	\begin{align}\label{eqn:master}
		\frac{1}{Q} \exp\left(5 \overline{\variance}\sum_{i\in[n]}\eta_i^2\right)
		\left(d \cdot \exp\left(-2(\lambda_1-\lambda_2)\sum_{i \in [n]} \eta_i  \right) + \variance \sum_{i=1}^{n} \eta_i^2 \exp
		\left(-\sum_{j =i+1 }^n 2\eta_j(\lambda_1-\lambda_2) \right)\right)
	\end{align}
	where $Q \eqdef \frac{\delta^2}{C\log(1/\delta)}\left( 1 - \frac{1}{\sqrt{\delta}}\sqrt{ \exp\left(18\overline{\variance}\sum_{i=1}^{n} \eta_i^2 \right) -  1}\right)$. 
	Since $\eta_i=\frac{\alpha}{(\lambda_1-\lambda_2)(\beta + i)}$, we have $\sum_{i\in [n]} \eta_i^2 \leq \frac{\alpha^2}{(\lambda_1-\lambda_2)^2\beta}$ and by our assumption that $\frac{\overline{\variance} \alpha^2}{(\lambda_1-\lambda_2)^2\beta} \leq \frac{1}{18}\log \left(1+\frac{\delta}{100}\right)$, we have: 
	\begin{equation}
	\label{eqn:main4}
	\exp\left(18 \overline{\variance}\sum_{i\in[n]}\eta_i^2\right) \leq \sqrt{2} \quad \Rightarrow \quad 
	Q\geq \frac{\delta^2}{C \log(1/\delta)}.
	\end{equation}
	Moreover, since $\sum_{i \in [n]} \eta_i \geq \frac{\alpha}{\lambda_1-\lambda_2} \log\left(1+n/\beta\right)$, we have
	\begin{align}
		\exp\left(-2(\lambda_1-\lambda_2)\sum_{i \in [n]} \eta_i\right) \leq \left(\frac{\beta}{\beta+n}\right)^{2\alpha}. \label{eqn:main4-5}
	\end{align}
	Note that $\sum_{j=i+1}^n \eta_j\leq \frac{\alpha}{\lambda_1-\lambda_2} \log\frac{n+\beta +1}{i+\beta+1}$. Moreover, as $\alpha > 1/2$, we have: 
	\begin{align}
	  &\sum_{i=1}^{n}\eta_i^2 \exp\left(-2(\lambda_1-\lambda_2)\sum_{j=i+1}^n \eta_j\right)\nonumber\\
	  &\leq \frac{\alpha^2}{(\lambda_1-\lambda_2)^2}\sum_{i=1}^{n} \frac{1}{(\beta+i)^2}\exp\left(2\alpha\log\frac{i+\beta+1}{n+\beta+1}\right),\notag\\&\leq \frac{(\beta+1)^2}{\beta^2}\cdot \frac{\alpha^2}{(\lambda_1-\lambda_2)^2(n+\beta+1)^{2\alpha}}\cdot \sum_{i=1}^n (i+\beta+1)^{2\alpha-2},\notag\\
	&\leq \frac{2\alpha^2}{(2\alpha-1)(\lambda_1-\lambda_2)^2(n+\beta+1)} \quad (\mbox{since } \alpha > 1/2 \mbox{ and } \sum_{i=1}^{n} i^{\gamma} \leq n^{\gamma+1}/(\gamma+1) \; \forall \; \gamma > -1). \label{eqn:main5}
	\end{align}
	Substituting~\eqref{eqn:main4},~\eqref{eqn:main4-5} and \eqref{eqn:main5} into~\eqref{eqn:master} proves the theorem.
\end{proof}

\section{Bounding the Convergence of Oja's Algorithm}
\label{sec:proofs}

In this section, we present a detailed proof of Theorem~\ref{thm:main1}. The proof follows the approach outlined in Section~\ref{sec:approach} and uses the notation of that section, i.e.
\begin{itemize}
	\item We let $\Bt{n} \eqdef \left(\eye + \eta_n \At{n}\right)\cdots \left(\eye + \eta_1 \At{1}\right)$ with $\Bt{0} \eqdef \eye$
	\item We let $\variancetotal \eqdef \variance+\lambda_1^2$
	\item We let $\Vperp \in \R^{d \times d - 1}$ denote a matrix whose columns form an orthonormal basis for the subspace orthogonal to $\v_1$ ~.
\end{itemize}
We first provide several technical lemmas bounding the expected behavior of $\B_n$ and ultimately use these lemmas to prove Theorem~\ref{thm:main1}. We begin with a straightforward lemma bounding the rate of increase of $\E{\B_t \B_t^\top}$ in spectral norm.

\begin{lemma}\label{lem:spectralboundE}
For all $t \geq 0$ and $\eta_i \geq 0$ we have
\[
\twonorm{\E{\Bt{t} \BtTr{t} }} \leq
\exp\left(\sum_{i\in[t]} 2\eta_i \lambda_1 + \eta_i^2 \variancetotal\right)
~.
\]
\end{lemma}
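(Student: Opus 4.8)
The plan is a one-step-at-a-time induction on $t$, exploiting the recursion $\Bt{t} = (\eye + \eta_t \At{t})\Bt{t-1}$. The base case $t=0$ holds since $\Bt{0} = \eye$ and the empty sum in the exponent is zero. For the inductive step, the first move is to record the right recursion for the \emph{expectation}. Because $\At{t}$ is independent of $\At{1},\dots,\At{t-1}$ (hence of $\Bt{t-1}$) and $\E{\At{t}}=\Cov$, taking expectation first over $\At{t}$ and then over the rest shows
\[
\E{\Bt{t}\BtTr{t}} \;=\; \mathcal{T}_t\!\left(\E{\Bt{t-1}\BtTr{t-1}}\right), \qquad \mathcal{T}_t(\M) \;\defeq\; \E{(\eye + \eta_t \At{t})\,\M\,(\eye + \eta_t \AtTrans{t})},
\]
where the pull-through uses that $\mathcal{T}_t$ is linear. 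The point is that $\mathcal{T}_t$ is a \emph{positive} linear map: it sends PSD matrices to PSD matrices (each $(\eye+\eta_t\At{t})\M(\eye+\eta_t\At{t})^\top$ is PSD when $\M$ is, and expectations of PSD matrices are PSD). For any positive linear map and any PSD $\M$, the bound $\M \preceq \twonorm{\M}\eye$ gives $\mathcal{T}_t(\M) \preceq \twonorm{\M}\,\mathcal{T}_t(\eye) \preceq \twonorm{\M}\,\twonorm{\mathcal{T}_t(\eye)}\,\eye$, hence $\twonorm{\mathcal{T}_t(\M)} \le \twonorm{\mathcal{T}_t(\eye)}\,\twonorm{\M}$. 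Combined with the recursion and the inductive hypothesis, this reduces the whole statement to bounding $\twonorm{\mathcal{T}_t(\eye)}$ by $\exp(2\eta_t\lambda_1 + \eta_t^2\variancetotal)$.

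To estimate $\twonorm{\mathcal{T}_t(\eye)}$, I would expand $\mathcal{T}_t(\eye) = \eye + 2\eta_t\Cov + \eta_t^2\,\E{\At{t}\AtTrans{t}}$, using that $\Cov$ is symmetric. Writing $\At{t} = \Cov + (\At{t}-\Cov)$ and using $\E{\At{t}-\Cov}=\mZero$ kills the cross terms, leaving $\E{\At{t}\AtTrans{t}} = \Cov^2 + \E{(\At{t}-\Cov)(\At{t}-\Cov)^\top}$, a sum of two PSD matrices. By Definition~\ref{def:streaming-PCA}, $\twonorm{\Cov^2} = \lambda_1^2$ and $\twonorm{\E{(\At{t}-\Cov)(\At{t}-\Cov)^\top}} \le \variance$, so the triangle inequality gives $\twonorm{\E{\At{t}\AtTrans{t}}} \le \lambda_1^2 + \variance = \variancetotal$. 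Since also $\Cov \preceq \lambda_1 \eye$, we get $\mathcal{T}_t(\eye) \preceq (1 + 2\eta_t\lambda_1 + \eta_t^2\variancetotal)\eye$, so $\twonorm{\mathcal{T}_t(\eye)} \le 1 + 2\eta_t\lambda_1 + \eta_t^2\variancetotal \le \exp(2\eta_t\lambda_1 + \eta_t^2\variancetotal)$ by the elementary inequality $1+x\le e^x$ from Lemma~\ref{lem:trace-cauchy-schwarz}. Iterating over $i \in [t]$ turns the product of per-step factors into $\exp\!\left(\sum_{i\in[t]} 2\eta_i\lambda_1 + \eta_i^2\variancetotal\right)$, which is the claim.

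I do not expect a genuine obstacle here; the lemma is a warm-up for the harder trace/variance bounds that follow. The only steps deserving a line of justification are the passage from $\E{\Bt{t}\BtTr{t}}$ to $\mathcal{T}_t\!\left(\E{\Bt{t-1}\BtTr{t-1}}\right)$ (independence plus linearity of $\mathcal{T}_t$) and the submultiplicativity $\twonorm{\mathcal{T}_t(\M)} \le \twonorm{\mathcal{T}_t(\eye)}\,\twonorm{\M}$ for positive maps; both are routine. An equivalent but slightly more hands-on route avoids naming $\mathcal{T}_t$: expand $\E{\Bt{t}\BtTr{t}} = \E{\Bt{t-1}\BtTr{t-1}} + \eta_t(\Cov\,\E{\Bt{t-1}\BtTr{t-1}} + \E{\Bt{t-1}\BtTr{t-1}}\,\Cov) + \eta_t^2\,\E{\At{t}\Bt{t-1}\BtTr{t-1}\AtTrans{t}}$ and bound each summand in the PSD order using $\E{\Bt{t-1}\BtTr{t-1}} \preceq \twonorm{\E{\Bt{t-1}\BtTr{t-1}}}\eye$, but the operator formulation keeps the bookkeeping cleanest.
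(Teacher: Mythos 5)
Your proposal is correct and follows essentially the same route as the paper: the paper also sets $\alpha_t = \twonorm{\E{\Bt{t}\BtTr{t}}}$, uses $\E{\Bt{t-1}\BtTr{t-1}} \preceq \alpha_{t-1}\eye$ together with the one-step expansion $\eye + 2\eta_t\Cov + \eta_t^2\E{\At{t}\AtTrans{t}}$, the decomposition $\E{\At{t}\AtTrans{t}} = \Cov^2 + \E{(\At{t}-\Cov)\trans{(\At{t}-\Cov)}} \preceq (\lambda_1^2+\variance)\eye$, and finishes by induction with $1+x\leq e^x$. Phrasing the recursion through the positive linear map $\mathcal{T}_t$ is just a cleaner packaging of the same argument, not a different proof.
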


\begin{proof}
Let $\alpha_t \eqdef \|\E{\Bt{t} \Bt{t}^\top} \|_2$, i.e., $\E{\Bt{t} \Bt{t}^\top} \preceq \alpha_t \eye$. For all $t > 0$, 
\begin{align}
\E{\Bt{t} \Bt{t}^\top} 
&=
\E{(\eye + \eta_{t} \A_{t}) \Bt{t - 1} \Bt{t - 1}^\top (\eye + \eta_{t} \A_{t})^\top}
\preceq \alpha_{t-1} \E{ (\eye + \eta_{t} \A_{t}) (\eye + \eta_{t} \A_{t}^\top)},\notag
\\
&= \alpha_{t - 1} \E{\eye + \eta_t \A_t + \eta_t \A_t^\top + \eta_t^2 \A_t \A_t^\top}\preceq \alpha_{t - 1} \left[\eye + 2\eta_t \Cov + \eta_t^2 (\Cov^2 + V \eye) \right],\label{eq:sbE_1}
\end{align} 
where the last inequality follows from $\E{A_t}=\Cov$ and, 
\[
\E{\A_t\A_t^\top} = \Cov^2 + \E{(\A_t - \Cov)(\A_t - \Cov)^\top} \preceq \Cov^2 + V \eye
~.
\]
Using \eqref{eq:sbE_1} along with $\|\E{\Bt{t} \Bt{t}^\top}\|_2=\alpha_t$, $\Cov \preceq \lambda_1 \eye$, and $\Cov^2 \preceq \lambda_1^2 \eye$, we have  for $\forall t > 0$: 
\[
\alpha_{t} \leq (1 + 2\eta_t \lambda_1 + \eta_t^2 (\lambda_1^2 + V)) \alpha_{t-1}. 
\]
The result follows by using induction along with $\alpha_0=1$ and $1+ x \leq e^x$. 
\end{proof}

Using Lemma~\ref{lem:spectralboundE} we next bound the expected value of $\trace{\Vperp^\top \Bt{n} \BtTr{n} \Vperp}$. Ultimately this will allow us to bound the value $\trace{\Vperp^\top \Bt{n} \BtTr{n} \Vperp}$ with by Markov's inequality.

\begin{lemma}\label{lem:traceboundE}
For all $t \geq 0$ and $\eta_i \leq \frac{1}{\lambda_1}$ the following holds
\begin{multline*}
\E{\trace{\Vperp^\top \Bt{t} \BtTr{t} \Vperp}} \leq 
\exp\left(\sum_{j \in [t]} 2\eta_j \lambda_2 + \eta_j^2 \variancetotal \right) \cdot \left(d+\variance \sum_{i= 1}^{t}\eta_i^2 \exp
\left(\sum_{j\in[i]} 2\eta_j(\lambda_1-\lambda_2) \right)\right)
~.
\end{multline*}
\end{lemma}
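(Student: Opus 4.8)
The plan is to derive a one-step recursion for $W_t \eqdef \E{\trace{\Vperp^\top\Bt{t}\BtTr{t}\Vperp}}$ and then unroll it. First I would condition on $\At{1},\dots,\At{t-1}$ (equivalently on $\Bt{t-1}$), so that $\Bt{t}=(\eye+\eta_t\At{t})\Bt{t-1}$ and the expectation over the fresh, independent sample $\At{t}$ can be taken using $\E{\At{t}}=\Cov$ and assumption~3 of Definition~\ref{def:streaming-PCA}. This gives
\begin{align*}
\E{\Bt{t}\BtTr{t}\mid\Bt{t-1}} &= \Bt{t-1}\BtTr{t-1} + \eta_t\left(\Cov\Bt{t-1}\BtTr{t-1}+\Bt{t-1}\BtTr{t-1}\Cov\right) \\
&\quad + \eta_t^2\,\Cov\Bt{t-1}\BtTr{t-1}\Cov + \eta_t^2\,\E{(\At{t}-\Cov)\Bt{t-1}\BtTr{t-1}\trans{(\At{t}-\Cov)}}.
\end{align*}

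Next I would conjugate by $\Vperp$ and take traces. Since the columns of $\Vperp$ are the eigenvectors $\v_2,\dots,\v_d$ of $\Cov$, we have $\Vperp^\top\Cov=\Lambda\Vperp^\top$ with $\Lambda\eqdef\mathrm{diag}(\lambda_2,\dots,\lambda_d)\preceq\lambda_2\eye$; combining this with the PSD trace-monotonicity inequality of Lemma~\ref{lem:trace-cauchy-schwarz} ($\iprod{\A}{\B}\le\iprod{\A}{\C}$ when $\B\preceq\C$), the first three groups above contribute at most $\left(1+2\eta_t\lambda_2+\eta_t^2\lambda_2^2\right)\trace{\Vperp^\top\Bt{t-1}\BtTr{t-1}\Vperp}$. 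For the noise term, I would write $\trace{\Vperp^\top(\At{t}-\Cov)\Bt{t-1}\BtTr{t-1}\trans{(\At{t}-\Cov)}\Vperp}=\iprod{\Bt{t-1}\BtTr{t-1}}{\trans{(\At{t}-\Cov)}\Vperp\Vperp^\top(\At{t}-\Cov)}$; since $\Vperp\Vperp^\top=\eye-\v_1\v_1^\top\preceq\eye$ this is at most $\iprod{\Bt{t-1}\BtTr{t-1}}{\trans{(\At{t}-\Cov)}(\At{t}-\Cov)}$, and taking $\E$ over $\At{t}$ together with assumption~3 bounds it by $\variance\trace{\Bt{t-1}\BtTr{t-1}}$. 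Splitting $\trace{\Bt{t-1}\BtTr{t-1}}=\v_1^\top\Bt{t-1}\BtTr{t-1}\v_1+\trace{\Vperp^\top\Bt{t-1}\BtTr{t-1}\Vperp}$ and taking total expectation then yields
\begin{align*}
W_t \;\le\; \left(1+2\eta_t\lambda_2+\eta_t^2\left(\lambda_2^2+\variance\right)\right)W_{t-1} + \eta_t^2\variance\;\v_1^\top\E{\Bt{t-1}\BtTr{t-1}}\v_1.
\end{align*}

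To close the recursion I would use $\lambda_2^2+\variance\le\lambda_1^2+\variance=\variancetotal$, so the multiplicative factor is at most $1+2\eta_t\lambda_2+\eta_t^2\variancetotal\le\exp\left(2\eta_t\lambda_2+\eta_t^2\variancetotal\right)$, and bound the source term by $\v_1^\top\E{\Bt{t-1}\BtTr{t-1}}\v_1\le\twonorm{\E{\Bt{t-1}\BtTr{t-1}}}\le\exp\left(\sum_{j\in[t-1]}2\eta_j\lambda_1+\eta_j^2\variancetotal\right)$ via Lemma~\ref{lem:spectralboundE}. Unrolling from $W_0=\trace{\Vperp^\top\Vperp}=d-1$, the leading term becomes $(d-1)\exp\left(\sum_{j\in[t]}2\eta_j\lambda_2+\eta_j^2\variancetotal\right)$, and the $i$-th summand of the noise part carries exponent $\sum_{j\in[i-1]}\left(2\eta_j\lambda_1+\eta_j^2\variancetotal\right)+\sum_{j=i+1}^t\left(2\eta_j\lambda_2+\eta_j^2\variancetotal\right)$; factoring out $\exp\left(\sum_{j\in[t]}2\eta_j\lambda_2+\eta_j^2\variancetotal\right)$ leaves a residual exponent $\sum_{j\in[i-1]}2\eta_j(\lambda_1-\lambda_2)-2\eta_i\lambda_2-\eta_i^2\variancetotal\le\sum_{j\in[i]}2\eta_j(\lambda_1-\lambda_2)$ since $\lambda_1\ge\lambda_2$ and $\eta_i\ge0$. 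Combined with $d-1\le d$, this is exactly the claimed inequality.

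The step I expect to be the crux is the handling of the noise term: one must arrange that the per-step leakage of mass into the subspace orthogonal to $\v_1$ is driven by the ($d$-free) quantity $\v_1^\top\E{\Bt{t-1}\BtTr{t-1}}\v_1$ rather than by the full $\trace{\Bt{t-1}\BtTr{t-1}}$, which would carry an extra factor $d$. This is why $d$ appears only on the leading term: the remaining self-feedback piece $\eta_t^2\variance\,\trace{\Vperp^\top\Bt{t-1}\BtTr{t-1}\Vperp}=\eta_t^2\variance W_{t-1}$ is absorbed into the multiplicative exponential through the inequality $\variance+\lambda_2^2\le\variancetotal$, which is the single place the precise definition $\variancetotal=\variance+\lambda_1^2$ is used.
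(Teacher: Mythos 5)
Your proof is correct and follows essentially the same route as the paper's: the identical one-step recursion $W_t \leq \left(1+2\eta_t\lambda_2+\eta_t^2\variancetotal\right)W_{t-1}+\eta_t^2\variance\,\v_1^\top\E{\Bt{t-1}\BtTr{t-1}}\v_1$, obtained by separating the $\Vperp$-component from the $\v_1$-component of the noise, closed via Lemma~\ref{lem:spectralboundE} and unrolled. The only differences are cosmetic — you condition on $\Bt{t-1}$ and conjugate by $\Vperp$ (choosing its columns to be eigenvectors, harmless since the trace depends only on $\Vperp\Vperp^\top$) instead of pairing $\E{\Bt{t-1}\BtTr{t-1}}$ with $\E{\left(\eye+\eta_t\At{t}\right)\Vperp\Vperp^\top\left(\eye+\eta_t\AtTrans{t}\right)}$ — and your bookkeeping in the unrolling, where the $\eta_j^2\variancetotal$ factors cancel exactly, is in fact slightly tidier than the paper's (it never even uses $\eta_i\leq 1/\lambda_1$).
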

%
\begin{proof}
Let $\alpha_t \eqdef \E{\trace{\Vperp^\top \Bt{t} \BtTr{t} \Vperp}}$. We first simplify $\alpha_t$ as follows:
\begin{align}
\alpha_t &= \iprod{\E{\Bt{t}\BtTr{t}}}{\Vperp \Vperp^\top }
= \iprod{\E{\Bt{t-1}\BtTr{t-1}}}{\E{\left(\eye+\eta_t\At{t}\right)
		\Vperp \Vperp^\top 
		\left(\eye+\eta_t\AtTrans{t}\right)} }.
\label{eqn:traceBoundE1}
\end{align}
Recall that $\E{\A_t}=\Cov$. Now, the second term on the right hand side can be bounded as follows:
\begin{align*}
&\E{\left(\eye+\eta_t\At{t}\right) \Vperp \Vperp^\top \left(\eye+\eta_t\AtTrans{t}\right)}, \\
&= \Vperp \Vperp^\top + \eta_t \Cov \Vperp \Vperp^\top + \eta_t \Vperp \Vperp^\top \Cov + \eta_t^2 \E{\At{t} \Vperp \Vperp^\top \AtTrans{t}}, \\
&= \Vperp \Vperp^\top + \eta_t \Cov \Vperp \Vperp^\top + \eta_t \Vperp \Vperp^\top \Cov + \eta_t^2 {\Cov \Vperp \Vperp^\top \Cov } + \eta_t^2 \E{\left(\At{t}-\Cov\right)\Vperp \Vperp^\top \trans{\left(\At{t}-\Cov\right)}}, \\
&\stackrel{\zeta_1}{\preceq} \Vperp \Vperp^\top + 2 \eta_t \lambda_2 \Vperp \Vperp^\top + \eta_t^2 \lambda_2^2 \Vperp \Vperp^\top+ \eta_t^2 \E{ \left(\At{t}-\Cov\right) \trans{\left(\At{t}-\Cov\right)} }, \\
&\stackrel{\zeta_2}{\preceq} \left(1+2 \eta_t \lambda_2 + \eta_t^2 \lambda_2^2 \right) \Vperp \Vperp^\top + \eta_t^2 \variance \eye = \left(1+2 \eta_t \lambda_2 + \eta_t^2 \lambda_2^2 + \eta_t^2 \variance \right) \Vperp \Vperp^\top + \eta_t^2 \variance \cdot \v_1 \v_1^\top,
\end{align*}
where $\zeta_1$ follows from the fact that $\Vperp$ is orthogonal to $\v_1$ and $\zeta_2$ follows from defintion of $\variance$. 

Plugging the above into~\eqref{eqn:traceBoundE1}, we get for all $t \geq 1$, 
\begin{align*}
\alpha_t &\leq \left(1+2 \eta_t \lambda_2 + \eta_t^2 (\lambda_2^2 + \variance)\right) \iprod{\E{\Bt{t-1}\BtTr{t-1}}}{\Vperp \Vperp^\top} + \eta_t^2 \variance \iprod{\E{\Bt{t-1}\BtTr{t-1}}}{\v_1 \v_1^\top}, \\
&\leq \left(1+2 \eta_t \lambda_2 + \eta_t^2 \variancetotal\right) \alpha_{t-1} + \eta_t^2 \variance \twonorm{\E{\Bt{t-1}\BtTr{t-1}}}, \\
&\leq \exp\left(2 \eta_t \lambda_{2} + \eta_t^2 \variancetotal \right) \alpha_{t-1} 
+ \eta_t^2 \variance \exp\left(
\sum_{i\in[t - 1]} \eta_i \lambda_1 + \eta_i^2 \variancetotal\right), 
\end{align*}
where the last inequality follows from $1+x\leq e^x$ and using Lemma~\ref{lem:spectralboundE}. 

Recursing the above inequality, we obtain
\begin{align*}
\alpha_t 
&\leq
\sum_{i \in [t]}
\eta_i^2 \variance
\exp \left( 
\sum_{j = i + 1}^{t}
2\eta_j \lambda_2 + \eta_j^2 \variancetotal\right)
\exp
\left(\sum_{j\in[i]} 2\eta_j \lambda_1 + \eta_j^2 \variancetotal \right)
+ \exp\left(\sum_{j \in [t]} 2\eta_j \lambda_2 + \eta_j^2 \variancetotal\right) \alpha_0,\\
& \leq\exp\left(\sum_{j \in [t]} 2\eta_j \lambda_2 + \eta_j^2 \variancetotal\right) \left(\alpha_0+\variance \sum_{i =1}^{t}\eta_i^2 \exp
\left(\sum_{j\in[i]} 2\eta_j(\lambda_1-\lambda_2) + \eta_j^2 \variancetotal \right)\right)
\end{align*}
Since $\Bt{0} = \eye$ we see that $\alpha_0 = d - 1 \leq d$. Using that $\eta_i \leq \frac{1}{\lambda_1} \leq \frac{1}{\lambda_2}$ completes the proof.
\end{proof}

Next we provide the lemmas that will allow us to lower bound $\v_1^\top \Bt{t} \Bt{t}^\top \v_1$. In Lemma~\ref{lem:lowerboundE} we lower bound $\E{\v_1^\top \Bt{t} \Bt{t}^\top \v_1}$ and in Lemma~\ref{lem:lowerboundV} we upper bound $\Var{\v_1^\top \Bt{t} \Bt{t}^\top \v_1}$. Ultimately, the lower bound follows using Chebyshev's inequality. 

\begin{lemma}\label{lem:lowerboundE}
For all $t \geq 0$ and $\eta_i \geq 0$ we have
\[
\E{\v_1^\top \Bt{t} \Bt{t}^\top \v_1}
\geq \exp \left(\sum_{i \in [t]} 2\eta_i \lambda_1 - 4\eta_i^2 \lambda_1^2\right)
\]
If we further assume that $\eta_i \leq \frac{1}{4\cdot \max\{\lambda_1,M\}}$ then
$
\E{\v_1^\top \Bt{t} \Bt{t}^\top \v_1}
\geq \exp (\lambda_1 \sum_{i \in [t]} \eta_i)
$.
\end{lemma}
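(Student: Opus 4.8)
The plan is to set up a recursion on $\beta_t \eqdef \E{\v_1^\top \Bt{t} \Bt{t}^\top \v_1}$ analogous to the one used in Lemma~\ref{lem:spectralboundE}, but now tracking a scalar quantity from below rather than a spectral-norm upper bound. First I would condition on $\Bt{t-1}$ and expand
\[
\v_1^\top \Bt{t}\Bt{t}^\top \v_1 = \v_1^\top(\eye+\eta_t\At{t})\,\Bt{t-1}\Bt{t-1}^\top\,(\eye+\eta_t\At{t})^\top \v_1,
\]
then take expectations over $\At{t}$ (independent of $\Bt{t-1}$). The cross term gives $2\eta_t\, \v_1^\top \Cov^{1/2}(\cdots)$-type contributions; the key structural fact is $\Cov \v_1 = \lambda_1 \v_1$, so $\E{(\eye+\eta_t\At{t})^\top \v_1 \v_1^\top (\eye+\eta_t\At{t})}$ is dominated from below, in the PSD order, by something of the form $(1+2\eta_t\lambda_1 - c\,\eta_t^2(\lambda_1^2 + \text{stuff}))\,\v_1\v_1^\top$ plus a PSD remainder. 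The point is that the $\v_1\v_1^\top$ piece is what couples cleanly to $\Bt{t-1}\Bt{t-1}^\top$ through $\v_1^\top \Bt{t-1}\Bt{t-1}^\top \v_1 = \beta_{t-1}/\E{\cdots}$ — more precisely, dropping the extra PSD remainder only decreases the quadratic form, so I get
\[
\beta_t \;\geq\; \bigl(1 + 2\eta_t\lambda_1 - c\,\eta_t^2 \lambda_1^2\bigr)\,\beta_{t-1}
\]
for an appropriate small constant (the target exponent is $4\eta_i^2\lambda_1^2$, so I need to be careful that the second-order terms coming from $\E{\At{t}\v_1\v_1^\top\At{t}^\top}$ — which contributes $\Cov\v_1\v_1^\top\Cov = \lambda_1^2 \v_1\v_1^\top$ plus the variance part — are all accounted for; the variance part is actually favorable since it's PSD and gets dropped).

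The main subtlety, and where I'd slow down, is handling the term $\E{\At{t}\v_1\v_1^\top\At{t}^\top}$: writing $\At{t} = \Cov + (\At{t}-\Cov)$, this equals $\lambda_1^2\v_1\v_1^\top + \Cov\v_1\v_1^\top\E{(\At{t}-\Cov)}^\top + \cdots + \E{(\At{t}-\Cov)\v_1\v_1^\top(\At{t}-\Cov)^\top}$. The mean-zero cross terms vanish, and the last term is PSD, so when I sandwich by $\Bt{t-1}\Bt{t-1}^\top$ and then by $\v_1$ on both sides I can simply drop it. But I should double check there's no sign issue: I want a \emph{lower} bound on $\beta_t$, so I need every term I drop to be PSD when viewed as a matrix between $\Bt{t-1}\Bt{t-1}^\top$ — which it is, since $\Bt{t-1}\Bt{t-1}^\top \succeq 0$ and the dropped matrices are PSD, and $\v_1^\top(\text{PSD})\v_1 \geq 0$. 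So the recursion is $\beta_t \geq (1 + 2\eta_t\lambda_1 - c\eta_t^2\lambda_1^2)\beta_{t-1}$; to convert the product $\prod_i (1+2\eta_i\lambda_1 - c\eta_i^2\lambda_1^2)$ into $\exp(\sum_i 2\eta_i\lambda_1 - 4\eta_i^2\lambda_1^2)$ I would invoke the second basic inequality of Lemma~\ref{lem:trace-cauchy-schwarz}, namely $1+x \geq \exp(x - x^2)$ for $x\geq 0$, applied with $x = 2\eta_i\lambda_1 - c\eta_i^2\lambda_1^2$ (nonnegative for $\eta_i$ small), absorbing the $-x^2$ correction into the constant $4$ via $(2\eta_i\lambda_1)^2 = 4\eta_i^2\lambda_1^2$ and a crude bound on the lower-order pieces.

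For the second claim, under the stronger step-size restriction $\eta_i \leq \tfrac{1}{4\max\{\lambda_1, M\}}$, I would just observe $4\eta_i^2\lambda_1^2 \leq \eta_i\lambda_1$ (since $\eta_i\lambda_1 \leq 1/4 \leq 1$ forces $4\eta_i\lambda_1 \leq 1$, hence $4\eta_i^2\lambda_1^2 \leq \eta_i\lambda_1$), so $2\eta_i\lambda_1 - 4\eta_i^2\lambda_1^2 \geq \eta_i\lambda_1$, and the exponent in the first bound is termwise at least $\sum_i \eta_i\lambda_1$, giving $\beta_n \geq \exp(\lambda_1\sum_i \eta_i)$ directly. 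This second part is essentially bookkeeping; the only real work is the PSD manipulation in the recursion, and in particular being disciplined about which direction each inequality goes since, unlike Lemma~\ref{lem:spectralboundE}, here we are chasing a lower bound.
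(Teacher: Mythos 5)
Your overall route is the paper's: set up the recursion for $\beta_t \eqdef \E{\v_1^\top \Bt{t}\BtTr{t}\v_1}$ by conditioning on $\Bt{t-1}$, use $\Cov\v_1=\lambda_1\v_1$, drop PSD terms, and close with $1+x\geq\exp(x-x^2)$; the second claim's bookkeeping ($4\eta_i^2\lambda_1^2\leq\eta_i\lambda_1$ once $4\eta_i\lambda_1\leq 1$) is also exactly right. The one genuine slip is the $-c\,\eta_t^2\lambda_1^2$ you insert into the recursion: there is no negative second-order correction at all. Expanding, $\E{(\eye+\eta_t\At{t})\v_1\v_1^\top(\eye+\eta_t\AtTrans{t})} = (1+2\eta_t\lambda_1)\v_1\v_1^\top + \eta_t^2\,\E{\At{t}\v_1\v_1^\top\AtTrans{t}}$, and the last term --- including its $\Cov\v_1\v_1^\top\Cov=\lambda_1^2\v_1\v_1^\top$ piece, not only the variance piece --- carries a $+\eta_t^2$ coefficient and is PSD, so for a lower bound it is dropped wholesale; the recursion is $\beta_t\geq(1+2\eta_t\lambda_1)\beta_{t-1}$, i.e.\ your constant $c$ must be $0$. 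This matters for the statement as written: if you keep $c>0$ and apply $1+x\geq\exp(x-x^2)$ with $x=2\eta_i\lambda_1-c\eta_i^2\lambda_1^2$, the exponent becomes $2\eta_i\lambda_1-(4+c)\eta_i^2\lambda_1^2$ (the extra piece is second order, so no ``crude bound on lower-order pieces'' absorbs it into the $4$), and you additionally need $\eta_i$ small enough to make $x\geq 0$, whereas the first claim of the lemma has constant exactly $4$ and is asserted for all $\eta_i\geq 0$. With $c=0$ the choice $x=2\eta_i\lambda_1\geq 0$ requires no step-size condition and gives $\exp(2\eta_i\lambda_1-4\eta_i^2\lambda_1^2)$ directly; in other words, the $-4\eta_i^2\lambda_1^2$ in the statement comes solely from the scalar inequality $1+x\geq\exp(x-x^2)$, not from any term of the matrix expansion. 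Once you set $c=0$, your argument coincides with the paper's proof.
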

\begin{proof}
	Let $\beta_t \eqdef \E{\v_1^\top \Bt{t}\BtTr{t} \v_1}$. Since $\Bt{t} = \left(\eye+\eta_t \At{t}\right) \Bt{t-1}$, we can bound $\beta_t$ as
	\begin{align*}
		\beta_t 
		&= \iprod{\E{\Bt{t-1}\BtTr{t-1}}}{\E{\left(\eye+\eta_t \At{t}\right) \v_1 \v_1^\top \left(\eye+\eta_t \AtTrans{t}\right)}} \\
		&= \iprod{\E{\Bt{t-1}\BtTr{t-1}}}{\v_1 \v_1^\top + \eta_t \Cov \v_1 \v_1^\top + \eta_t \v_1 \v_1^\top \Cov + \eta_t^2 \E{\At{t} \v_1 \v_1^\top \ustarTr\AtTrans{t}}}
		\\
		&\geq \iprod{\E{\Bt{t-1}\BtTr{t-1}}}{\v_1 \v_1^\top + \lambda_1 \eta_t \v_1 \v_1^\top + \lambda_1 \eta_t \v_1 \v_1^\top} ~.
	\end{align*}
Consequently $\beta_t \geq (1 + 2\eta_t \lambda_1) \beta_{t - 1}$. Furthermore, $\Bt{0}=\eye$ and hence $\beta_0 = \twonorm{\v_1}^2 = 1$. Proceeding by induction and using that  $1 + x \geq \exp(x - x^2)$ for all $x \geq 0$ finishes the proof.
\end{proof}

\begin{lemma}\label{lem:lowerboundV}
For $t \geq 0$ suppose that $\eta_i \leq \frac{1}{4\cdot \max\{\lambda_1,M\}}$ for all $i \in [t]$ then.
\[
\E{\left(\v_1^\top \Bt{t}\BtTr{t} \v_1\right)^2}
\leq 
\exp\left( \sum_{i \in [t]} 4\eta_i \lambda_1 + 10\eta_i^2 \variancetotal \right)
\]
\end{lemma}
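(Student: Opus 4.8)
The plan is to mirror the recursive argument used in Lemma~\ref{lem:lowerboundE}, but now tracking the second moment $\gamma_t \eqdef \E{\left(\v_1^\top \Bt{t}\BtTr{t} \v_1\right)^2}$. Writing $\Bt{t} = (\eye + \eta_t \At{t})\Bt{t-1}$ and expanding $\v_1^\top \Bt{t}\BtTr{t}\v_1 = \v_1^\top(\eye+\eta_t\At{t})\Bt{t-1}\BtTr{t-1}(\eye+\eta_t\AtTrans{t})\v_1$, I would square this quantity and take the expectation over $\At{t}$ conditioned on $\Bt{t-1}$. The key structural point is that after conditioning, $\At{t}$ appears only through the linear-in-$\At{t}$ and quadratic-in-$\At{t}$ terms of $(\eye+\eta_t\At{t})\v_1$, so squaring produces terms of degree at most $4$ in $\eta_t\At{t}$. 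Using $\E{\At{t}}=\Cov$, $\Cov\preceq\lambda_1\eye$, the variance bound $\E{(\At{t}-\Cov)(\At{t}-\Cov)^\top}\preceq\variance\eye$ (and its transpose), and the almost-sure bound $\twonorm{\At{t}-\Cov}\leq\upperbound$ to control the higher (third and fourth) moments, I expect to get a recursion of the form $\gamma_t \leq (1 + 4\eta_t\lambda_1 + c\,\eta_t^2\variancetotal)\,\gamma_{t-1}$ for a modest constant $c$, provided $\eta_t\leq \tfrac{1}{4\max\{\lambda_1,\upperbound\}}$ so that all the $\eta_t^k$ terms with $k\geq 3$ can be absorbed into the $\eta_t^2$ term.

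Concretely, the steps in order: (i) condition on $\Bt{t-1}$ and set $\M \eqdef \Bt{t-1}\BtTr{t-1}\succeq\mZero$; note $\v_1^\top(\eye+\eta_t\At{t})\M(\eye+\eta_t\AtTrans{t})\v_1$ is a nonnegative scalar, call it $X_t$; (ii) write $X_t = \v_1^\top\M\v_1 + 2\eta_t\,\v_1^\top\At{t}\M\v_1 + \eta_t^2\,\v_1^\top\At{t}\M\AtTrans{t}\v_1$ after using symmetry of $\M$, then compute $\E{X_t^2 \mid \Bt{t-1}}$ by expanding into six cross terms; (iii) bound each cross term: the $(\v_1^\top\M\v_1)^2$ term is $\gamma_{t-1}$ after outer expectation; the $\v_1^\top\M\v_1\cdot\E{\v_1^\top\At{t}\M\v_1}$ cross terms give the $\eta_t\lambda_1$ contributions via $\E{\At{t}}=\Cov\preceq\lambda_1\eye$ together with $\M\preceq\twonorm{\M}\eye$ so that every quadratic form in $\M$ is at most $\twonorm{\M}$ times a quadratic form in $\eye$, which relates back to $(\v_1^\top\M\v_1)^2$ only after one more application of $\twonorm{\M}\leq\trace{\M}$-type reasoning — more carefully, I'd keep things in terms of $\v_1^\top\M\v_1$ and $\twonorm{\M}$ and use $\twonorm{\M}=\twonorm{\Bt{t-1}\BtTr{t-1}}$ bounded by Lemma~\ref{lem:spectralboundE}, which is exactly why $\variancetotal=\variance+\lambda_1^2$ shows up; (iv) collect the $\eta_t^2$ terms, which are exactly where $\variance$ enters (through $\E{\At{t}\M\AtTrans{t}}\preceq\Cov\M\Cov + \variance\twonorm{\M}\eye$, using a variant of the computation in Lemma~\ref{lem:traceboundE}) and where $\upperbound$ is needed to bound the $\eta_t^3,\eta_t^4$ remainder terms such as $\eta_t^3\,\E{\v_1^\top\At{t}\M\v_1\cdot\v_1^\top\At{t}\M\AtTrans{t}\v_1}$ — here the small-stepsize assumption $\eta_t\upperbound\leq\tfrac14$ lets me dominate $\eta_t^3$ and $\eta_t^4$ terms by $\eta_t^2$; (v) conclude $\gamma_t\leq\exp(4\eta_t\lambda_1 + 10\eta_t^2\variancetotal)\gamma_{t-1}$ via $1+x\leq e^x$, and finish by induction with $\gamma_0 = (\v_1^\top\eye\v_1)^2 = 1$.

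I would organize the messy part cleanly by introducing the auxiliary matrix inequality that for symmetric PSD $\M$,
\[
\E{\left(\v_1^\top(\eye+\eta\At{t})\M(\eye+\eta\AtTrans{t})\v_1\right)^2 \;\middle|\; \M}
\leq \left(1 + 4\eta\lambda_1 + 10\eta^2\variancetotal\right)(\v_1^\top\M\v_1)^2 ,
\]
but this is actually false as stated without also invoking $\twonorm{\M}$ — so the correct intermediate claim must carry a $\twonorm{\M}\cdot\v_1^\top\M\v_1$ term, and then one separately feeds in the spectral bound on $\E{\twonorm{\M}}$... wait, but $\twonorm{\M}$ and $\v_1^\top\M\v_1$ inside a product are not independent of the fresh sample, so this is fine: after conditioning, $\twonorm{\Bt{t-1}\BtTr{t-1}}$ is a constant, and I can bound $\gamma_t \leq (1+4\eta_t\lambda_1+c_1\eta_t^2\variancetotal)\gamma_{t-1} + c_2\eta_t^2\variance\,\E{\twonorm{\Bt{t-1}\BtTr{t-1}}\cdot\v_1^\top\Bt{t-1}\BtTr{t-1}\v_1}$, and then use $\twonorm{\M}\leq$ its spectral bound — but that spectral bound grows like $\exp(\sum 2\eta_i\lambda_1+\eta_i^2\variancetotal)$, which is comparable to $\sqrt{\gamma_{t-1}}$ only if $\gamma_{t-1}\approx(\E X_{t-1})^2$, which we don't know a priori.

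\textbf{The main obstacle} is precisely this coupling: naively the second-moment recursion acquires a cross term involving $\twonorm{\M}$ that is not obviously controlled by $\gamma_{t-1}$ itself. The resolution I anticipate — and the delicate step the proof hinges on — is to observe that $\twonorm{\Bt{t-1}\BtTr{t-1}} = \twonorm{\BtTr{t-1}}^2 \geq (\v_1^\top\BtTr{t-1}\v_1)^{\!?}$... no; rather, one should keep the entire analysis in terms of the scalar $\v_1^\top\M\v_1$ by exploiting that the extra quadratic forms appearing, e.g. $\v_1^\top\Cov\M\v_1$, satisfy $\v_1^\top\Cov\M\v_1 = \lambda_1\,\v_1^\top\M\v_1$ exactly (since $\Cov\v_1=\lambda_1\v_1$ and $\M$ symmetric!), and similarly $\v_1^\top\At{t}\M\AtTrans{t}\v_1 \leq$ a bound that, after taking expectations, only involves $\v_1^\top\M\v_1$ plus a $\variance$-weighted $\trace$-free piece. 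In other words, because $\v_1$ is an exact eigenvector of $\Cov$, all the "mean-part" contributions collapse to scalar multiples of $\v_1^\top\M\v_1$, and only the genuinely stochastic fluctuations $\At{t}-\Cov$ produce terms needing the spectral norm bound, and those come with the small factor $\eta_t^2\variance$ which the stepsize assumption renders harmless. Getting the bookkeeping of these six cross terms exactly right so that the final constant is $\leq 10$ (rather than some larger constant) is where the real care is required, and I would double-check it by tracking only the dominant $\eta_t^0$ and $\eta_t^1$ terms first, verifying they give $1 + 4\eta_t\lambda_1$, and then showing every remaining term is bounded by $\eta_t^2\variancetotal$ times something $\leq 10$ in aggregate under $\eta_t\leq\frac{1}{4\max\{\lambda_1,\upperbound\}}$.
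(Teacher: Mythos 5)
There is a genuine gap, and you in fact put your finger on it yourself without resolving it. Your plan peels off the \emph{last} factor, conditioning on $\M \eqdef \Bt{t-1}\BtTr{t-1}$ and averaging over $\At{t}$. The mean contributions do collapse nicely because $\Cov\v_1=\lambda_1\v_1$, but the fluctuation term does not: after conditioning, $\E{\v_1^\top\left(\At{t}-\Cov\right)\M\trans{\left(\At{t}-\Cov\right)}\v_1}$ can only be bounded by $\variance\twonorm{\M}$ (or worse, $\variance\trace{\M}$), not by a multiple of $\v_1^\top\M\v_1$. Hence the recursion for $\gamma_t=\E{\left(\v_1^\top\Bt{t}\BtTr{t}\v_1\right)^2}$ picks up the cross term $\eta_t^2\variance\,\E{\twonorm{\M}\cdot\v_1^\top\M\v_1}$, which is not controlled by $\gamma_{t-1}$: $\twonorm{\M}$ can be arbitrarily large on events where $\v_1^\top\M\v_1$ is small, and the noise can rotate those large directions back onto $\v_1$. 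Your proposed rescue via the eigenvector identity only handles the mean part, and the alternative of Cauchy--Schwarz would require a bound on $\E{\twonorm{\Bt{t-1}\BtTr{t-1}}^2}$, i.e.\ fourth moments of the operator norm, which Lemma~\ref{lem:spectralboundE} (a bound on $\twonorm{\E{\Bt{t}\BtTr{t}}}$ only) does not give; a trace surrogate would inject a factor of $d$ that the stated lemma does not have. So the recursion, as you set it up, does not close.

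The paper's proof sidesteps this with one structural trick you are missing: it peels off the \emph{innermost} (first) factor rather than the outermost one. Writing $\Wts{t}{s}$ for the product of the last $s$ factors, one conditions on $\Wts{t}{t-1}$ and averages over the independent sample $\At{1}$, so the conditioned object is $\Gt{t-1}=\WtsTr{t}{t-1}\v_1\v_1^\top\Wts{t}{t-1}$, a \emph{rank-one} PSD matrix. Then $\left(\v_1^\top\Bt{t}\BtTr{t}\v_1\right)^2=\trace{\left(\eye+\eta_1\AtTrans{1}\right)\Gt{t-1}\left(\eye+\eta_1\At{1}\right)\left(\eye+\eta_1\AtTrans{1}\right)\Gt{t-1}\left(\eye+\eta_1\At{1}\right)}$, and after expanding, every term (second order via $\trace{\G\E{\At{1}\AtTrans{1}}\G}\leq\variancetotal\trace{\G^2}$, third and fourth order via $\twonorm{\At{1}}\leq\upperbound+\lambda_1$ and the step-size condition) is bounded by a constant multiple of $\trace{\Gt{t-1}^2}$, which is \emph{exactly} the second-moment quantity $\E{\left(\v_1^\top\Wts{t}{t-1}\WtsTr{t}{t-1}\v_1\right)^2}$ being recursed on. No spectral norm of a full-rank partial product ever appears, so the recursion is self-contained and yields $\gamma_t\leq\exp\left(4\eta\lambda_1+10\eta^2\variancetotal\right)\gamma_{t-1}$ directly. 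Without this reversal of the peeling order (or some equivalent device supplying joint control of $\twonorm{\M}$ and $\v_1^\top\M\v_1$), your outline cannot deliver the stated bound.
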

\begin{proof}
	Let $\Wts{t}{s} \eqdef \left(\eye + \eta_t \At{t}\right)\cdots \left(\eye + \eta_{t-s+1}\At{t-s+1}\right)$ and $\gamma_s \eqdef \E{\left(\v_1^\top \Wts{t}{s}\WtsTr{t}{s} \v_1\right)^2}$. Note that $\Wts{t}{t}=\Bt{t}$ and $\gamma_t=\E{\v_1^\top\Bt{t}\BtTr{t}\v_1}$. Now, 
	\begin{align}
		\gamma_t
		&= \trace{\E{\WtsTr{t}{t}\v_1 \v_1^\top \Wts{t}{t} \WtsTr{t}{t}\v_1 \v_1^\top \Wts{t}{t}}} \nonumber \\
		&= \trace{\E{(\eye+\eta_1 \AtTrans{1}) \WtsTr{t}{t-1}\v_1 \v_1^\top \Wts{t}{t-1} (\eye+\eta_1 \At{1})(\eye+\eta_1 \AtTrans{1}) \WtsTr{t}{t-1} \v_1 \v_1^\top \Wts{t}{t-1} (\eye+\eta_1 \At{1})}} \nonumber \\
		&= \trace{\E{(\eye+\eta_1 \AtTrans{1}) \Gt{t-1} (\eye+\eta_1 \At{1}) (\eye+\eta_1 \AtTrans{1}) \Gt{t-1} (\eye+\eta_1 \At{1})}}, \label{eqn:upperboundV-1}
	\end{align}
	where  $ \Gt{t-1} \eqdef {\WtsTr{t}{t-1}\v_1 \v_1^\top \Wts{t}{t-1}}$. In order to bound the above quantity, we first bound the above expression for an arbitrary $\Gt{t-1} \equiv \G$. We then take an expectation over only $\At{1}$ and then finally take an expectation over $\Gt{t-1}$. That is, for an arbitrary fixed symmetric matrix $\G$, we have: 
	\begin{align}
		&\trace{\E{\left(\eye+\eta_1 \AtTrans{1}\right) \G \left(\eye+\eta_1 \At{1}\right)\left(\eye+\eta_1 \AtTrans{1}\right) \G \left(\eye+\eta_1 \At{1}\right)}} \nonumber \\
		&=\trace{\E{\left(\G + \eta_1 \At{1}^\top \G + \eta_1 \G \At{1} + \eta_1^2 \At{1}^\top \G \At{1}\right)^2}} \nonumber \\
		&= \mathrm{Tr}\left(\G^2 + \eta_1 \E{\AtTrans{1}} \G^2 + \eta_1 \G^2 \E{\At{1}} + \eta_1 \G\left(\E{ \At{1}} + \E{\AtTrans{1}}\right)\G \right. \nonumber \\
		& \quad + \eta_1^2 \E{\AtTrans{1} \G \At{1} \G} + \eta_1^2 \E{\AtTrans{1} \G \AtTrans{1} \G} + \eta_1^2 \E{\G \At{1} \G \At{1}} + \eta_1^2 \E{\G \AtTrans{1} \G \At{1}} \nonumber \\
		&\quad + \eta_1^2 \G \E{\At{1} \AtTrans{1}} \G 
		+ \eta_1^2 \E{\AtTrans{1} \G^2 \At{1}} + \eta_1^3 \E{\AtTrans{1} \G \left(\At{1}+\AtTrans{1}\right) \G \At{1}} 
		 \nonumber \\
		&\quad \left. + \eta_1^3 \E{\AtTrans{1} \G \At{1} \AtTrans{1} \G } + \eta_1^3 \E{ \G \At{1}\AtTrans{1} \G \At{1}} + \eta_1^4 \E{\AtTrans{1} \G \At{1}\AtTrans{1} \G \At{1}}   \right) \nonumber \\
		&= \trace{\G^2} + 4 \eta_1 \trace{\Cov \G^2} + 2 \eta_1^2 \trace{\E{\At{1} \AtTrans{1}} \G^2} + \eta_1^2 \trace{\E{\AtTrans{1} \G \At{1} \G}} \nonumber \\
		& \quad + \eta_1^2 \trace{\E{\AtTrans{1} \G \AtTrans{1} \G}} + \eta_1^2 \trace{\E{\G \At{1} \G \At{1}}} + \eta_1^2 \trace{\E{\G \AtTrans{1} \G \At{1}}} \nonumber \\
		&\quad + 2 \eta_1^3 \trace{\E{\AtTrans{1} \G \left(\At{1}+\AtTrans{1}\right) \G \At{1}}} + \eta_1^4 \trace{\E{\AtTrans{1} \G \At{1}\AtTrans{1} \G \At{1}}} \label{eqn:upperboundV-2}
	\end{align}
	We now bound the various terms above as follows. Each of the second order terms can be bounded using Lemma~\ref{lem:trace-cauchy-schwarz} as follows:
	\begin{align}
		\E{\trace{\AtTrans{1} \G \At{1} \G}} &\leq \frac{1}{2} \E{\frob{\AtTrans{1} \G}^2 + \frob{\At{1}\G}^2} \nonumber \\
		&= \frac{1}{2} \left(\trace{\G\E{\At{1}\AtTrans{1}}\G + \G\E{\AtTrans{1}\At{1}}\G} \right) \leq  (\variance + \lambda_1^2 ) \trace{\G^2}. \label{eqn:secondorder}
	\end{align} 
	The third order terms can be bounded as follows:
	\begin{align}
		\E{\trace{\AtTrans{1}\G\At{1}\G\At{1}}} &\leq \E{\twonorm{\At{1}} \trace{\AtTrans{1}\G \G\At{1}}} \nonumber \\
		&\leq \left(\upperbound + \lambda_1\right) \trace{\G\E{\At{1}\AtTrans{1}}\G}
		\leq \left(\upperbound + \lambda_1\right) \variancetotal \cdot
		 \trace{\G^2}. \label{eqn:thirdorder}
	\end{align}
	where we used the assumption that $\twonorm{\At{1}} \leq \twonorm{\At{1}-\Cov} + \twonorm{\Cov} \leq \upperbound + \lambda_1$ with probability $1$. Finally the fourth order term can be bounded as
	\begin{align}
		\trace{\E{\AtTrans{1} \G \At{1} \AtTrans{1} \G \At{1}}}
		&\leq \left(\upperbound + \lambda_1\right)^2 \trace{\G^2 \E{\At{1} \AtTrans{1}}}
		\leq \left(\upperbound + \lambda_1\right)^2 \variancetotal \cdot \trace{\G^2}. \label{eqn:fourthorder}
	\end{align}
	Plugging~\eqref{eqn:secondorder},~\eqref{eqn:thirdorder} and~\eqref{eqn:fourthorder} into~\eqref{eqn:upperboundV-2} tells us that
	\begin{align*}
		&\trace{\E{\left(\eye+\eta_1 \AtTrans{1}\right) \G \left(\eye+\eta_1 \At{1}\right)\left(\eye+\eta_1 \AtTrans{1}\right) \G \left(\eye+\eta_t \At{1}\right)}} \\
		&\quad \leq \trace{\G^2} + 4 \eta_1 \lambda_1 \trace{ \G^2} + 5 \eta_1^2 \variancetotal \cdot \trace{\G^2}  \\
		& \qquad + 4 \eta_1^3 \left(\upperbound + \lambda_1 \right) \variancetotal \cdot \trace{\G^2} + \eta_1^4 \left(\upperbound + \lambda_1 \right)^2 \variancetotal \cdot \trace{\G^2} \\
		&\quad = \left( 1 + 4\eta_1 \lambda_1 + 5\eta_1^2 \variancetotal + 4 \eta_1^3 \left(\upperbound + \lambda_1\right) \variancetotal + \eta_1^4 \left(\upperbound + \lambda_1\right)^2 \variancetotal \right) \trace{\G^2} \\
		&\quad \leq \exp\left( 4\eta_1 \lambda_1 + 10\eta_1^2 \variancetotal \right) \trace{\G^2}
	\end{align*}
	where in the last line we used that $\eta_i \leq \frac{1}{4 \max\{\upperbound,\lambda_1\}}$ and that $1 + x \leq \exp(x)$

	Using the value $\G = \Gt{t-1}={\WtsTr{t}{t-1}\v_1 \v_1^\top \Wts{t}{t-1}}$ and plugging the above into~\eqref{eqn:upperboundV-1}, we have
	\begin{align*}
		\gamma_t &= \trace{\E{\left(\eye+\eta_1 \AtTrans{1}\right) \Gt{t-1} \left(\eye+\eta_1 \At{1}\right) \left(\eye+\eta_1 \AtTrans{1}\right) \Gt{t-1} \left(\eye+\eta_1 \At{1}\right)}} \\
		&\leq \exp\left( 4\eta_1 \lambda_1 + 10\eta_1^2 \variancetotal  \right) \E{\trace{\Gt{t-1}^2}} 
		= \exp\left( 4\eta_1 \lambda_1 + 10\eta_1^2 \variancetotal \right) \gamma_{t-1},
	\end{align*}
	where we used the fact that $\gamma_{t-1} = \E{\trace{\Gt{t-1}^2}}$. Since $\gamma_0 = 1$, induction proves the lemma.
\end{proof}

We now have everything to prove Theorem~\ref{thm:main1}.

\begin{proof}[Proof of Theorem~\ref{thm:main1}]
	As discussed in Section~\ref{sec:approach} the main idea of this proof to use that Algorithm~\ref{algo:oja} is essentially one step of power method for the matrix $\Bt{n}$ and use Lemma~\ref{lem:onesteppowermethod} to bound the error.  To this end, we lower and upper bound $\v_1^\top\Bt{n}\BtTr{n}\v_1$ and	$\trace{\Vperp^\top\Bt{n}\BtTr{n}\Vperp}$, respectively.

First, using Chebyshev's inequality, we have: 
	\begin{align*}
		\P{\abs{\v_1^\top\Bt{n}\BtTr{n}\v_1 - \E{\v_1^\top\Bt{n}\BtTr{n}\v_1}} > \frac{1}{\sqrt{\delta}}\sqrt{\Var{\v_1^\top\Bt{n}\BtTr{n}\v_1}}} < \delta.
	\end{align*}
	So with probability greater than $1-\delta$, the following holds: 
	\begin{align}
		&\v_1^\top\Bt{n}\BtTr{n}\v_1 > \E{\v_1^\top\Bt{n}\BtTr{n}\v_1} - \frac{1}{\sqrt{\delta}}\sqrt{\Var{\v_1^\top\Bt{n}\BtTr{n}\v_1}} \notag\\
		&= \E{\v_1^\top\Bt{n}\BtTr{n}\v_1} - \frac{1}{\sqrt{\delta}}\sqrt{\E{\left(\v_1^\top\Bt{n}\BtTr{n}\v_1\right)^2} - \E{\v_1^\top\Bt{n}\BtTr{n}\v_1}^2}  \notag\\
		&\stackrel{\zeta_1}{\geq} \exp\left(2\lambda_1 \sum_{i=1}^{n}\eta_i -4\lambda_1^2\sum_{i=1}^{n}\eta_i^2\right) \times \left( 1 - \frac{1}{\sqrt{\delta}}\sqrt{ \exp\left(18\sum_{i=1}^{n} \eta_i^2  
			\variancetotal \right) -  1}\right) \notag \\
\label{eqn:main1}
		\end{align}
where $\zeta_1$ follows from Lemma~\ref{lem:lowerboundE} and \ref{lem:lowerboundV}. 

		Furthermore, using Lemma~\ref{lem:traceboundE} and Markov's inequality, we have with probability at least $ 1-\delta$,
		\begin{multline}
\trace{\Vperp^\top \Bt{t} \BtTr{t} \Vperp}
\leq \frac{\exp\left(\sum_{i \in [n]} 2\eta_i \lambda_2 + \eta_i^2 \variancetotal \right)}{\delta} \cdot \left(d+\variance \sum_{i=1}^{n}\eta_i^2 \exp
\left(\sum_{j\in[i]} 2\eta_j(\lambda_1-\lambda_2) \right)\right)
~. \label{eqn:main2}
		\end{multline}
		Consequently with probability at least $1 - 2\delta$ both  \eqref{eqn:main1} and \eqref{eqn:main2} hold and therefore the result follows by  Lemma~\ref{lem:onesteppowermethod} and choosing a $\delta$ that is smaller by a constant.
\end{proof}


\section{Conclusion and Future Work} \label{sec:conc}
This work presented a finite sample complexity and asymptotic convergence rates for the classic Oja's algorithm for top-$1$ component streaming PCA that match well known matrix concentration and perturbation results for computing the top eigenvector. In fact, asymptotically our bound improves upon standard matrix Bernstein bounds by a factor of $\order{\log d}$. Our results are tighter than existing streaming PCA results by a factor of either $\order{d}$ or $\order{1/\textrm{gap}}$.

Our analysis relied on a novel view of the algorithm and is technically fairly simple. We hope that our analysis opens a way to make progress on the many variants of PCA that occur in both theory and practice.
In particular, we believe the following directions should be of wide interest:
\begin{itemize}
	\item \textbf{Multiple components}: Currently, our result holds only for estimating the top eigenvector of $\Cov$. Extension of our technique to compute top-$k$ eigenvectors is an important future direction. 
	\item \textbf{Rayleigh quotient}: Another standard metric to measure optimality of $\wt{n}$ is Rayleigh quotient: $\wt{n}^\top \Cov \wt{n}$. Converting our bounds on $\sin^2(\wt{n},\v_1)$ to Rayleigh quotient loses a multiplicative factor of $\order{1/\textrm{gap}}$ compared to the optimal rate. A direct analysis that does not lose this factor is an interesting open problem. Results on Rayleigh quotient may also help in obtaining sample complexity guarantees that are independent of eigenvalue gap. 
	\item \textbf{High Probability}: This work focused on obtaining tight bounds on the error. However, the dependence of our results on success probability is quite suboptimal. One way to fix this is to run many copies of the algorithm, each with say $3/4$ success probability and then output the geometric median of the solutions, which can be done in nearly linear time\cite{cohen2016median}. However, we conjecture that a tighter analysis using our techniques might directly lead to improved dependency on success probability and possibly help solve some of the other problems mentioned above. 
\end{itemize}


\section{Acknowledgements}
Sham Kakade acknowledges funding from the Washington Research Foundation for innovation in
Data-intensive Discovery.

\newpage


\end{document}